\documentclass[journal]{IEEEtran}
\usepackage{filecontents}
\usepackage{amsmath,amsthm,dsfont,tabularx,scalefnt,comment}
\usepackage[cmintegrals]{newtxmath}
\usepackage[nocompress]{cite}
\usepackage{subcaption}
\usepackage{caption}
\usepackage{bm}
\usepackage{color,soul}
\usepackage{multirow}
\usepackage{siunitx}
\usepackage{setspace,adjustbox}

\newtheorem{theorem}{Theorem}[section]

\usepackage{booktabs}
\usepackage{algorithm}
\usepackage{algorithmic}

\usepackage{amsmath,amsthm}
\usepackage{mathtools}

\newtheorem{lemma}{Lemma}
\newtheorem{proposition}{Proposition}
\newtheorem{assumption}{Assumption}

\begin{document}

\bibliographystyle{unsrt}

\title{Personalized Federated Learning by Energy-Efficient UAV Communications}
\author{Shiqian~Guo,~
        Jianqing~Liu,~\IEEEmembership{Member,~IEEE,}
        Beatriz Lorenzo,~\IEEEmembership{Senior Member,~IEEE,}
\thanks{Corresponding author: Jianqing~Liu}
\thanks{This work was supported in part by National Science Foundation under grants ECCS-2312738 and CNS-2225427.}
\thanks{Shiqian Guo,  Jianqing Liu are the
	Department of Computer Science, North Carolina State University, Raleigh, NC 27606 USA (e-mail: sguo26@ncsu.edu; jliu96@ncsu.edu). Beatriz Lorenzo is with the Department of Electrical and Computer Engineering, University of Massachusetts, Amherst, MA 01003 USA (e-mail:  blorenzo@umass.edu).}}

\markboth{IEEE TRANSACTIONS ON WIRELESS COMMUNICATIONS,~Vol.~X, No.~X, X~X}%
{Shiqian \MakeLowercase{\textit{et al.}}: Personalized Federated Learning by Energy-Efficient UAV Communications}
\maketitle


\begin{abstract}
Federated learning (FL) is an effective paradigm for enhancing the learning capability of edge devices while preserving data privacy. In geographically dispersed FL systems, such as sensor networks in remote areas, unmanned aerial vehicles (UAVs) can flexibly establish high-quality communication links to support parameter exchange. However, device heterogeneity and the limited battery capacity of UAVs pose significant challenges. Specifically, data heterogeneity slows convergence, while scheduling all devices for global collaboration incurs excessive communication and energy costs. To overcome these challenges, we adopt a strict separation between a globally shared backbone and permanently local personalization heads, thereby mitigating the impact of data heterogeneity. Furthermore, we propose a gradient-based scheduling strategy that jointly considers energy efficiency and learning performance. In each communication round, the backbone is updated only by the top-$\alpha$ devices ranked by gradient $\ell_{2}$-norm, ensuring that optimization focuses on the most informative updates. Simulation results demonstrate that the proposed scheme achieves higher learning accuracy than state-of-the-art approaches while significantly reducing UAV energy consumption.
\end{abstract}

\begin{IEEEkeywords}
Federated learning, Data heterogeneity, UAV communication, Energy consumption, Gradient-based scheduling.
\end{IEEEkeywords}

\IEEEpeerreviewmaketitle

\section{Introduction}

\IEEEPARstart{F}{L} is a distributed machine learning architecture designed to train models collaboratively across multiple devices while keeping data localized. It has received a wide range of attention for its promises of producing high-quality models and maintaining data privacy~\cite{9159929}. In the model training phase, devices in an FL system use their local dataset to compute model parameters and communicate them to a server for global aggregation and model update. Subsequently, the server sends the global model back to the devices for further training. The iteration between devices and server continues till the model converges. The distributed and collaborative nature of FL makes it a natural fit on the network edge, where sensors, robots, drones, and other communication devices can be coordinated to train an FL engine. Among recently proposed FL use cases~\cite{9453811,9709639,9606829}, researchers have been specifically advocating that the use of UAVs as FL servers can better support edge devices in inaccessible places (e.g., mountains or battlefields), because UAVs can offer high-quality, agile, and reconfigurable communication channels to/from the devices.

Nonetheless, UAV-aided FL systems face several unique challenges stemming from the stringent energy constraints of UAVs and the severe data heterogeneity across edge devices. To address these issues, recent studies have explored energy-efficient and personalized FL schemes. For mitigating data heterogeneity, Jiang et al.~\cite{jiang2024hete} propose a hybrid strategy that combines local and global updates, while Arivazhagan et al.~\cite{arivazhagan2019federated} introduce a split learning architecture in which a globally shared backbone is paired with permanently local personalization heads. Although effective in improving personalization, these approaches require full-device participation in every communication round, resulting in substantial communication overhead and excessive energy consumption in large-scale UAV-assisted networks.

Frequent transmission of high-dimensional model parameters between UAVs and edge devices can rapidly deplete UAV batteries, prolong the FL training process, and ultimately degrade global model accuracy. Moreover, aggregating updates from less informative devices may dilute the contributions of highly informative ones, leading to slower convergence. To improve training efficiency, Jiang et al. and Guo et al.~\cite{jiang2024over,guo2023energy} develop gradient-based, heterogeneity-aware, and energy-conscious device scheduling mechanisms that selectively activate devices based on their optimization relevance. However, these methods still rely on FedAvg-style aggregation of full model parameters, which can be suboptimal under severe data heterogeneity.

To overcome these limitations, this work proposes a gradient-based scheduling framework that jointly optimizes energy efficiency and learning performance. The proposed method strictly separates the globally shared backbone from permanently local personalization heads, thereby effectively alleviating the adverse impact of heterogeneous data distributions. In each communication round, only the top-$\alpha$ devices ranked by the gradient $\ell_{2}$-norm are selected to participate in backbone updates. This design prioritizes the most informative optimization signals while significantly reducing redundant communication and energy consumption. The main contributions of this work are summarized as follows:\par

\begin{itemize}
\item We investigate a UAV-aided FL system under data heterogeneity and adopt a personalized learning framework that separates a globally shared backbone from permanently local personalization heads to mitigate performance degradation caused by non-IID data. To improve learning efficiency, we introduce a gradient-based scheduling strategy that selects only the top-$\alpha$ devices ranked by gradient $\ell_2$-norm in each round, thereby reducing unnecessary participation and UAV energy consumption. To further optimize energy usage, we incorporate a communication-aware device and UAV selection scheme that jointly considers learning efficiency and communication energy consumption.

\item We provide a theoretical convergence guarantee for the proposed personalized FL framework with top-$\alpha$ gradient-based device scheduling. The analysis shows that scheduling all devices in each round is not optimal, as it may increase personalization-induced bias without improving convergence. In contrast, selecting only the top-$\alpha$ devices balances selection bias and personalization bias while reducing communication overhead and energy consumption. Experimental results further confirm that the theoretical trends agree with practical training behavior.

\item Extensive simulations further validate the proposed framework, demonstrating faster convergence and lower energy consumption than existing UAV-aided FL approaches. The proposed scheme also remains robust under varying degrees of data heterogeneity and different numbers of UAVs, consistently outperforming representative UAV-assisted FL baselines.
\end{itemize}

The remainder of this paper is organized as follows. Section II reviews related work. Section III introduces the system model and analyzes the energy consumption of the UAV-aided FL system. Section IV presents the problem formulation and solution algorithm. Section V provides the simulation results, and Section VI concludes the paper.

\section{Related Works}
\subsection{UAV-aided Federated Learning}
Recent studies have extensively explored the integration of UAVs into FL systems to enhance coverage flexibility, communication efficiency, and learning performance \cite{wang2024energy,asheralieva2025effective,chien2025density,wang2024decentralized}.
In particular, Tang et al.~\cite{tang2025integrated} propose an integrated sensing–computation–communication framework for UAV-assisted federated edge learning, emphasizing the necessity of cross-layer co-design in dynamic aerial networks.

A growing body of work leverages reinforcement learning (RL) and federated reinforcement learning to address the high mobility and dynamic environments of UAV networks. Li et al.~\cite{10423758} train local RL models on individual UAVs and perform federated aggregation on a leader UAV to reduce energy consumption and task completion time in traffic offloading scenarios.Zhang et al.~\cite{zhang2025age} investigate age-of-information minimization in UAV-enabled IoT networks via federated RL, while Tarekegn et al.~\cite{tarekegn2025trajectory} adopt a federated multi-agent deep RL framework to jointly optimize UAV trajectory control and communication fairness in multi-UAV systems.

Beyond learning-based control, several works focus on scalable FL architectures for large-scale UAV networks. Zhagypar et al.~\cite{zhagypar2025uav} study UAV-assisted unbiased hierarchical FL and provide rigorous performance and convergence analysis. Cui et al.~\cite{10314794} develop an asynchronous FL framework for UAV swarms operating over wide geographic regions. To address heterogeneity among UAVs, He et al.~\cite{10050796} propose a cluster-based FL architecture that mitigates learning degradation caused by diverse UAV capabilities. Furthermore, Tang et al.~\cite{10190740} and Yang et al.~\cite{9453811} investigate multi-UAV server architectures, where multiple UAVs act as aggregation nodes and cooperatively optimize model exchanges with ground devices to improve learning performance in large-scale deployments. However, in these approaches, UAVs typically do not directly exchange learned models with one another, which restricts inter-UAV collaboration and limits system scalability.

To overcome this limitation, Saif et al.~\cite{10550002} consider UAV-aided FL with energy-efficient UAV-to-UAV (U2U) communications, enabling global model exchanges among UAVs after local aggregation of devices within each UAV’s coverage.
Despite these advances, most existing UAV-aided FL systems still rely on the conventional FedAvg algorithm \cite{mcmahan2017communication}, which is known to exhibit slow convergence and performance degradation under data heterogeneity \cite{9743558}.

\subsection{Personalized Federated Learning}
Personalized FL has demonstrated its capability to address challenges across diverse application domains \cite{10384325,10159650,10629203,10334003}. In \cite{10159650}, Wang et al. proposed a semi-supervised FL framework for wireless communication systems to enhance learning performance under label deficiency and device heterogeneity. Zhou et al. in \cite{10384325} developed a personalized multi-modal fusion network within an FL system to address challenges arising from multi-modal data in metaverse-enabled smart applications. Several works have investigated personalized FL in UAV-aided networks. In \cite{10629203}, Wang et al. proposed that each UAV independently trained an RL model to maximize energy efficiency in task offloading; while a cloud-based FL framework then aggregated a global model adapted to heterogeneous RL tasks. Similarly, Xu et al.\cite{10334003} proposed a federated RL approach with ensemble distillation and regularization mechanisms to optimize throughput in heterogeneous UAV systems.

\subsection{Energy Efficiency for UAV-aided Personalized FL}
Energy efficiency has emerged as a central design objective in UAV-aided FL, due to the stringent energy constraints of both aerial platforms and edge devices. 
Recent advances in model compression and quantization-aware collaborative inference have been shown to significantly reduce communication and computation overhead for edge devices, which is particularly beneficial for energy-constrained federated learning scenarios \cite{lyu2026quantization}.
Several works focus on UAV trajectory and scheduling optimization to improve energy efficiency. Fu et al.\cite{fu2025energy} formulated an energy-efficient UAV-assisted FL framework that jointly optimizes UAV trajectories, device scheduling, and communication resources. Complementarily, Wu et al.\cite{wu2025joint} studied joint UAV deployment and edge association, showing that optimized UAV placement and user association can balance communication energy consumption and learning accuracy, particularly in large-scale FL scenarios.
Beyond trajectory design, learning paradigm adaptations have been explored to further reduce energy costs. Zhang et al.\cite{zhang2025energy} investigated federated distillation over UAV-assisted wireless networks, where lightweight knowledge transfer replaces full model exchange, achieving substantial energy savings and latency reduction compared to conventional FL.
Other studies emphasize advanced multiple access and resource allocation strategies. Consul et al.\cite{consul2025energy} considered NOMA-based UAV-assisted MEC networks and proposed energy-efficient distributed learning schemes. Similarly, Li et al.\cite{li2025distributed} addressed resource-efficiency optimization in UAV-enabled IoT networks via distributed algorithms that coordinate communication and computation resources among UAVs and edge devices. Jiang et al.\cite{jiang2025joint} examined hierarchical federated learning in UAV-aided marine IoT systems, jointly optimizing communication and computing resources to minimize energy consumption across multiple FL tiers. 

\section{UAV-aided Personalized Federated Learning}
\subsection{System Model}

\begin{figure}[htpb]
	\centering
	\subfloat[System setup for the UAV-aided personalized FL system.]{
		\includegraphics[width=3in]{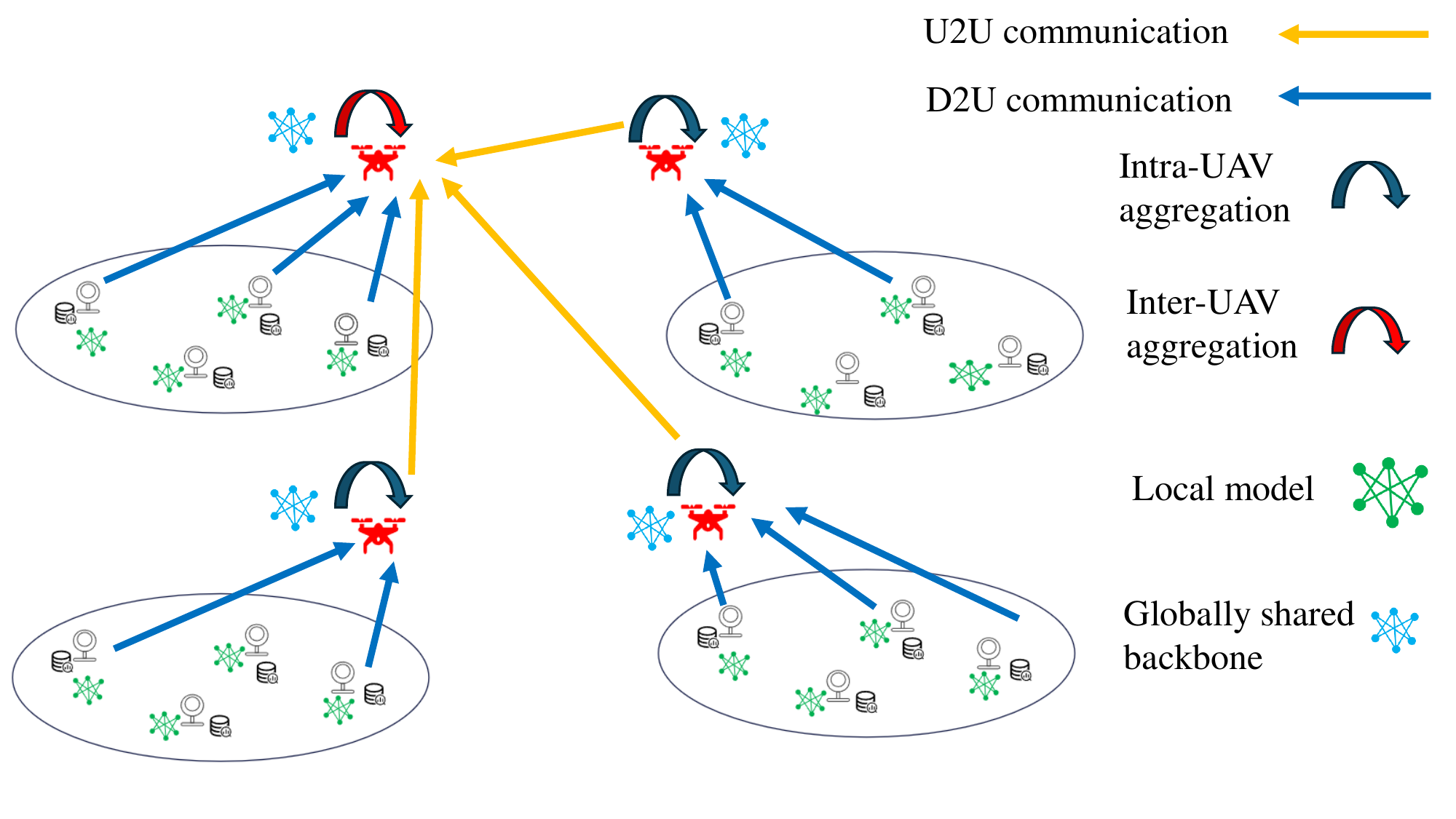}}\\
        	\subfloat[Local model architecture]{
		\includegraphics[width=2in]{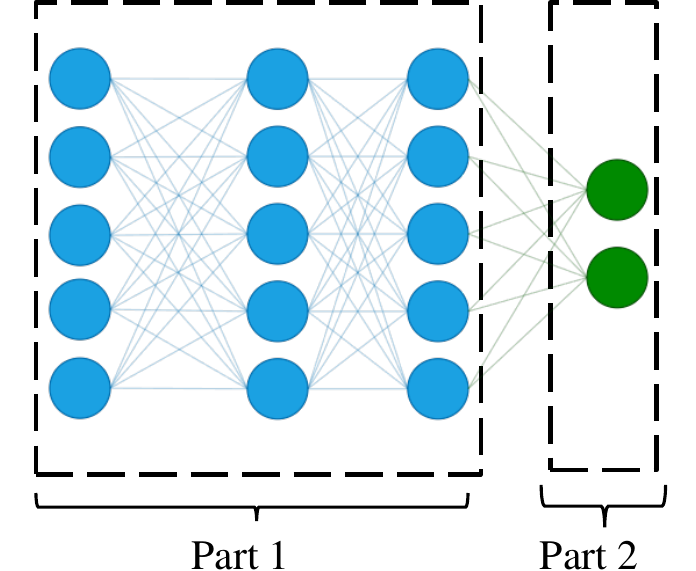}} 
	\caption{A UAV-aided FL system with data-heterogeneous devices, where only the base part of local models is exchanged via D2U (device-to-UAV) and U2U communications. The system bandwidth is allocated using Orthogonal Frequency-Division Multiplexing (OFDM) for model transmission, while the personalized part is permanently trained on local data. In each round, only the base parts of the top-\(\alpha\) devices—selected according to the \(\ell_2\)-norm of their base-part gradients—are uploaded for global aggregation. Part 1 in (b) denotes the globally shared backbone, while Part 2 corresponds to the permanently local personalization heads.}
    \label{fig:uavPFL-system}
\end{figure} 

As illustrated in Fig.~\ref{fig:uavPFL-system}, we consider a UAV-aided personalized FL system consisting of \( U \) rotary-wing UAVs and \( K \) edge devices, uniformly distributed within each UAV’s communication area.  Each UAV serves a local sub-area, and inter-UAV aggregation enables collaboration across the entire network. The UAVs are denoted by \( \mathcal{U} = \{1, \dots, U\} \) and provide a viable solution in remote areas where terrestrial base stations are unavailable or impractical to deploy. Due to the high energy consumption of UAV flight propulsion and the limited onboard energy budget, we assume that UAVs hover at a fixed altitude \( L_u \), which is set as a constant for safety reasons \cite{8663615}. Accordingly, the UAV locations are predetermined and fixed. Each edge device, denoted by \( \mathcal{K} = \{1, \dots, K\} \), has moderate computing and storage resources to perform local training tasks. 

 Following the personalized method in\cite{arivazhagan2019federated}, all edge devices perform local training using a model partitioned into a \emph{base part} (shared parameters) and a \emph{personalized part} (device-specific parameters).
At communication round $t$, device $k$ computes stochastic gradients at $(\boldsymbol{\theta}(t),{\boldsymbol{\phi}}_k(t))$:
\begin{align}
\begin{split}
&{\boldsymbol{g}}_k(t,\boldsymbol{\theta})=\nabla_{\boldsymbol{\theta}} F_k({\boldsymbol{\theta}}(t),{\boldsymbol{\phi}}_k(t))+{\mathbf{\zeta}}_k(t), \\
&{\mathbf{g}}_k(t,{\boldsymbol{\phi}})=\nabla_{{\boldsymbol{\phi}}_k}F_k(\boldsymbol{\theta}(t),{\boldsymbol{\phi}}_k(t))+\upsilon_k(t).
\end{split}
\label{eq:device:parameter:update}
\end{align}

where $\mathcal{I}_t$ denotes the filtration (history) information available up to round $t$, ${\mathbf{\zeta}}_k(t)$ is stochastic noise in the $\boldsymbol{\theta}$-gradient at device $k$, $\mathbb{E}[{\mathbf{\zeta}}_k(t)\mid\mathcal{I}_t]=0$, $\mathbb{E}\|{\mathbf{\zeta}}_k(t)\|^2\le\sigma_{\boldsymbol{\theta}}^2$. $\upsilon_k(t)$ represents stochastic noise in the ${\boldsymbol{\phi}}_k$-gradient at device $k$, $\mathbb{E}[\upsilon_k(t)\mid\mathcal{I}_t]=0$, $\mathbb{E}\|\upsilon_k(t)\|^2\le\sigma_{\boldsymbol{\phi}}^2$.

 The global objective is bilevel:
\begin{equation}
    \min_{\boldsymbol{\theta}} 
    F(\boldsymbol{\theta}):=\frac{1}{K}\sum_{k=1}^K F_k\big(\boldsymbol{\theta},{\boldsymbol{\phi}}_k^\star(\boldsymbol{\theta})\big),
    \label{eq:outer-problem}
\end{equation}

where ${\boldsymbol{\phi}}_k^\star(\boldsymbol{\theta})=\arg\min_{{\boldsymbol{\phi}}_k}F_k(\boldsymbol{\theta},{\boldsymbol{\phi}}_k)$. \(\boldsymbol{\theta} \in \mathbb{R}^{M_{\boldsymbol{\theta}}}\)denotes the model parameter vector of dimension \(M_{\boldsymbol{\theta}}\). The operator \(\min\) denotes the optimization over the model parameter to minimize the average loss across all devices.

Upon completing local training, each edge device computes the \(\ell_2\)-norm of the gradient of its base part. Each UAV first collects the gradient norms from the edge devices within its communication range via D2U communications. The norms are then forwarded among UAVs via U2U communications to a designated UAV that collects all gradient norms. Since the gradient norms are significantly smaller in size compared to the local model parameters, the corresponding communication time and energy overhead are assumed to be neglected.  

Then, based on the collected gradient norms, the designated UAV selects and schedules the \emph{top-\(\alpha\)} devices for the next-round model aggregation. Let $S_t$ be the set of top-$\alpha$ devices. Its size $|S_t|=\lceil\alpha K\rceil$ is determined by the top $\alpha$ percent of devices with the largest $\|{\boldsymbol{g}}_k(t,\boldsymbol{\theta})\|_{2}$. In each round of global updates among UAVs, the following procedures are carried out sequentially.

\begin{itemize}
    \item \textbf{Intra-UAV aggregation}: Each UAV $u$ collects the base gradients $\{ {\boldsymbol{g}}_k(t,\boldsymbol{\theta}) \}_{k \in S_t \cap \mathcal{K}_u}$ of its serving edge devices, and computes an intermediate weighted sum:
    \begin{equation}
        {\mathbf{g}}_u = \sum_{k \in S_t \cap \mathcal{K}_u} |D_k| {\boldsymbol{g}}_k(t,\boldsymbol{\theta}),
        \label{equ:intra_UAV_agg}
    \end{equation}
    where $\mathcal{K}_u$ is the set of edge devices within the communication range of UAV $u$, and $|D_k|$ is the number of training samples of device $k$.
    
    \item \textbf{Inter-UAV aggregation}: The designated UAV collects ${\mathbf{g}}_u$ from all UAVs, and the global base model is updated via a weighted gradient step:
    \begin{equation}
        \boldsymbol{\theta}(t+1) = \boldsymbol{\theta}(t) - \eta_t \frac{1}{\sum_{k \in S_t} |D_k|} \sum_{u=1}^U {\mathbf{g}}_u.
        \label{equ:inter_UAV_agg}
    \end{equation}
    
    \item \textbf{Synchronization}: The updated global base model $\boldsymbol{\theta}(t+1)$ is broadcast to all UAVs, and then sent to devices for the next communication round.
\end{itemize}

Personalized parameters are updated locally at each device:
\begin{equation}
    {\boldsymbol{\phi}}_k(t+1) = {\boldsymbol{\phi}}_k(t) - \gamma_t {\mathbf{g}}_k(t,{\boldsymbol{\phi}}), \quad k = 1, \dots, K,
    \label{equ:personalized_local_update}
\end{equation}
with no aggregation.

\subsection{Computation and Communication Model}
In the studied UAV-aided personalized FL system, it is assumed that local edge devices have sufficient energy to complete the FL procedure. UAVs, however, have a limited energy budget. Therefore, in the following, we only consider the energy consumption of UAVs. The hovering time of UAVs in each communication round contributes to both computation time and communication time.

\subsubsection{Computation Model}

The computation within one communication round of the UAV-aided personalized FL system includes local training at edge devices and intra- and inter-aggregation at UAVs. The time consumption is illustrated as follows:

Each device trains a local model using its GPU. The local training time of device $k$ is
\begin{equation*}
T^{\mathrm{local}}_{k} = \frac{\tau \cdot |D_k|}{\nu}, 
\label{equ:local_training_time}
\end{equation*}
where $\nu$  denotes the per-device training throughput of GPU, $\tau$ represents the local epoch of model training at each device.

Each UAV performs model aggregation using its onboard CPU. Following the formulation in \cite{10550002}, the energy consumption for aggregation at UAV $u$ is given by
\begin{equation*}
E^{\mathrm{agg}}_{u} = \kappa\, q_u \,\varphi_u^{2} \Bigg( \sum\limits_{k} \rho_{k,u} + \mu_{u} \sum\limits_{u' \neq u} \rho_{u',u} \Bigg),
\label{equ:aggregation_energy}
\end{equation*}
where $\kappa$ is the energy consumption coefficient, $q_{u}$ is the number of computing cycles per data bit on UAV $u$, $\varphi_{u}$ is the CPU frequency of UAV $u$, $\rho_{k,u} = 1$ if device $k$ uploads its local model to UAV $u$ ($0$ otherwise), $\rho_{u',u} = 1$ if UAV $u'$ transmits its aggregated model to UAV $u$ ($0$ otherwise), and $\mu_{u}=1$ if UAV $u$ is selected to perform inter-UAV aggregation ($0$ otherwise). Notably, only the designated UAV performs the inter-aggregation operation.

The aggregation time at UAV $u$ is given by \cite{10550002} as
\begin{equation*}
T^{\mathrm{agg}}_{u} = \frac{q_u \Bigg( \sum\limits_{k} \rho_{k,u} + \mu_{u} \sum\limits_{u' \neq u} \rho_{u',u} \Bigg)}{\varphi_u},
\label{equ:aggregation_time}
\end{equation*}

\subsubsection{Communication Model}
The communication within a single round includes the exchange of local and global models. 
Local models are uploaded by edge devices via D2U communications. 
Specifically, we adopt realistic channel models for UAV-assisted federated learning based on \cite{zhang2019cellular,9453811}.
The D2U channel gain from device $k$ to UAV $u$ is modeled as
\begin{align*}
H_{k,u} = - \Big( P_{\mathrm{LoS},k} \cdot PL_{\mathrm{LoS},k} + P_{\mathrm{NLoS},k} \cdot PL_{\mathrm{NLoS},k} \Big),
\end{align*}
where
\begin{align*}
PL_{\mathrm{LoS},k} &= 20 \log \left(\frac{4 \pi f d_{k,u}}{c_0}\right) + \eta_{\mathrm{LoS}}, \\
PL_{\mathrm{NLoS},k} &= 20 \log \left(\frac{4 \pi f d_{k,u}}{c_0}\right) + \eta_{\mathrm{NLoS}}, \\
P_{\mathrm{LoS},k} &= \frac{1}{1 + a \exp(-b(r_k - a))}, \quad r_k = \sin^{-1} \frac{L_u}{d_{k,u}},
\end{align*}
$f$ is the carrier frequency, $c_0$ is the speed of light, and $d_{k,u}$ is the distance between device $k$ and UAV $u$.  

The delay for uploading the local model from device $k$ is
\begin{equation*}
T^{\mathrm{upload}}_{k} = \sum\limits_{u} \rho_{k,u} \frac{M_{\boldsymbol{\theta}}}{B_k \log_2 \left(1 + \frac{P_k^{\mathrm{max}} |H_{k,u}|}{B_k \sigma^2} \right)},
\end{equation*}
where $B_k = B / |S_t|$, assuming that a shared channel of total bandwidth $B$ is equally multiplexed among the scheduled edge devices. $P_k^{\mathrm{max}}$ is the transmit power of the edge device, and $\sigma^2$ is the noise power.

The local models collected by UAVs are then exchanged via U2U communications. 
The U2U channel is modeled as free-space path loss:
\begin{equation*}
H_{u,u'} = G (d_{u,u'})^{-2},
\end{equation*}
where $G$ is a constant power gain and $d_{u,u'}$ is the distance between UAVs $u$ and $u'$.  

These models capture the essential large-scale fading characteristics relevant to UAV-assisted networks, consistent with \cite{zhang2019cellular}. 
Small-scale fading effects are not explicitly modeled, as they are typically averaged out over the time scales relevant to federated learning aggregation.

The time to transmit intra-UAV aggregated models by UAV $u$ is
\begin{equation*}
T^{\mathrm{U2U}}_{u} = \frac{\max \limits_{k \in S_t \cap \mathcal{K}_u} \rho_{k,u} M_{\boldsymbol{\theta}}}{B_u \log_2 \left(1 + \frac{P_u^{\mathrm{max}} |H_{u,u'}|}{B_u \sigma^2} \right)},
\label{equ:U2U_transmit_time}
\end{equation*}
where $B_u = {B}/{\sum \limits_{u'}\sum \limits_{u'\neq u}\rho_{u',u}}$ with the assumption that a shared channel of bandwidth $B$ is equally multiplexed among all selected UAV pairs. The operator \(\max\) indicates the largest value among the devices, representing the bottleneck latency. Then, the corresponding energy consumption is
\begin{equation*}
E^{\mathrm{U2U}}_u = P_u^{\mathrm{max}} T^{\mathrm{U2U}}_u.
\label{equ:U2U_energy}
\end{equation*}
The time for synchronizing the global model by a designated UAV $u$ is
\begin{equation*}
\begin{aligned}
T^{\mathrm{broadcast}}_u = &\max \limits_{k} \frac{M_{\boldsymbol{\theta}}}{B \log_2 \left(1 + \frac{P_u^{\mathrm{max}} |H_{k,u}|}{B \sigma^2} \right)}\\ 
&+ \mu_u \cdot \max \limits_{u' \neq u} \frac{\rho_{u',u} M_{\boldsymbol{\theta}}}{B \log_2 \left(1 + \frac{P_u^{\mathrm{max}} |H_{u',u}|}{B \sigma^2} \right)},
\label{equ:global_broadcast_time}
\end{aligned}
\end{equation*}
the energy consumption is
\begin{equation*}
E^{\mathrm{broadcast}}_u = P_u^{\mathrm{max}} T^{\mathrm{broadcast}}_u.
\label{equ:global_broadcast_energy}
\end{equation*}

\subsubsection{Hovering Time and Energy}
In our system, UAVs remain at fixed locations throughout the entire communication rounds of the learning process. That is, UAVs do not move between training rounds, and any repositioning occurs outside the considered communication period. As a result, the energy consumption due to flight dynamics, propulsion, or trajectory changes is negligible during each round, and the dominant energy cost comes from hovering and communication-related operations.

The total hovering time of UAVs during one training round is
\begin{align*}
T_h = &\max \limits_k T^{\mathrm{local}}_k + \max \limits_k T^{\mathrm{upload}}_k + \\
&\max \limits_u T^{\mathrm{agg}}_u + \max \limits_u T^{\mathrm{U2U}}_u + \max \limits_u T^{\mathrm{broadcast}}_u,
\label{equ:hovering_time}
\end{align*}
so the corresponding energy consumption is
\begin{equation*}
E_h = P_h T_h,
\label{equ:hovering_energy}
\end{equation*}
where $P_h$ is the hovering power of the rotary-wing UAV.

Finally, the overall energy consumption of UAVs in the $t$-th communication round is
\begin{equation*}
E(t) = U \cdot E_h + \sum\limits_u \left( E^{\mathrm{agg}}_u + E^{\mathrm{U2U}}_u + E^{\mathrm{broadcast}}_u \right).
\label{equ:total_energy}
\end{equation*}

For convenience, we summarize key symbols in Table~\ref{tab:notation}.
\begin{table}[t]
\label{tab:notation}
\centering
\caption{Notation Summary}
\begin{tabular}{|c|l|}
\hline
\textbf{Symbol} & \textbf{Description} \\
\hline
$K$ & Total number of devices \\
$U$ & Total number of UAVs \\
$\mathcal{K}_u$ & Devices served by UAV $u$ \\
$S_t$ & Set of selected devices \\
$\alpha$ & ratio of selected devices\\
$\mathbf{g}_k(t,\cdot)$ & Gradient of device $k$ at round $t$ \\
$\mathbf{g}_u$ &Gradient aggregated on UAV $u$\\
$\boldsymbol{\theta}(t)$ & Global base model at round $t$\\
${\boldsymbol{\phi}}_k(t)$ & Personalized parameters of device $k$ at round $t$\\
$F_k(\boldsymbol{\theta},{\boldsymbol{\phi}}_k)$ & Local loss function of device $k$\\
$|D_k|$ &Local dataset size of device $k$\\
$M_{\boldsymbol{\theta}}$ & Size of global base model\\
$H_{u,u'}$ & U2U Channel gain\\
$H_{k,u}$ & D2U Channel gain\\
$\rho_{k,u}$ & Indicator of D2U communication\\
$\rho_{u',u}$ & Indicator of U2U communication\\
$E(t)$ & UAVs' total energy consumption at round $t$\\
$\eta$ & Learning rate \\
$\sigma^2$ & Noise power  \\
\hline
\end{tabular}
\label{tab:notation}
\end{table}

\section{Problem Formulation and Solution}
\subsection{Problem Formulation}
The design goal of this work is to maximize learning performance while adhering to UAV energy constraints. On one hand, collecting local models from spatially distributed devices through D2U and U2U communications incurs substantial energy consumption. On the other hand, data heterogeneity among devices significantly slows convergence, thereby requiring more communication rounds to achieve a desired accuracy. Thus, we formulate the following optimization problem to enhance training performance under strict UAV energy constraints:
\begin{subequations}\label{equ:optim_problem}
\begin{align}
\min_{\boldsymbol{\theta,\rho_{u',u},\rho_{u',u}}} \, & \frac{1}{K} \sum_{k=1}^{K} F_k\big(\boldsymbol{\theta}, {\boldsymbol{\phi}}_k^{\star}(\boldsymbol{\theta})\big) \\
&\text{s.t.} \, \sum_{t=1}^{T^{\text{max}}} E(t) \le \hat {E}.
\end{align}
\end{subequations}

where $T^{\text{max}}$ denotes the maximum number of communication rounds. Constraint (\ref{equ:optim_problem}b) ensures that the total energy consumption of UAVs does not exceed the energy budget $\hat {E}$.

\subsection{Solution Algorithm}

From a device scheduling perspective, the convergence analysis in Theorem~\ref{thm:nonconvex} reveals a trade-off between stochastic
variance, selection bias, and personalization-induced bias. Selecting more devices per round reduces stochastic variance by improving
the approximation of the global gradient, but it also increases communication overhead and may amplify personalization-induced bias due to heterogeneous device models. Under limited bandwidth and UAV energy budgets, scheduling all devices is therefore not optimal.

To balance these effects, we adopt a top-$\alpha$ device selection strategy, which prioritizes devices with the largest $\ell_2$-norm gradients.
Such devices contribute most to global model updates, allowing the system to reduce selection bias while maintaining a limited communication load.
Consequently, faster convergence can be achieved under the same resource constraints compared to random device selection.

The overall solution framework for problem~(\ref{equ:optim_problem}) is summarized in Algorithm~\ref{alg:uav-pfl}. The procedure combines
UAV-assisted communication with a bilevel personalized FL structure, where global backbone parameters are aggregated across devices while
personalized heads remain locally updated. UAVs serve as hierarchical aggregators, enabling efficient model coordination under limited energy
and bandwidth resources.

The algorithm begins with the initialization of a shared global base model $\boldsymbol{\theta}(0)$ and personalized parameters ${\boldsymbol{\phi}}_k(0)$ for each device $k$. At each communication round $t$, the following steps are executed:  

\begin{enumerate}
    \item \textbf{Local Gradient Computation:}  
    Each edge device computes two types of stochastic gradients:  
    ${\boldsymbol{g}}_k(t,\boldsymbol{\theta})$ w.r.t. the global base parameters $\boldsymbol{\theta}(t)$, and  
    ${\mathbf{g}}_k(t,{\boldsymbol{\phi}})$ w.r.t. the personalized parameters ${\boldsymbol{\phi}}_k(t)$.  
    This separation ensures that shared knowledge is collaboratively optimized, while personalization is retained locally.  

    \item \textbf{Top-$\alpha$ Device Scheduling:}  
    Only a subset $S_t$ of edge devices is scheduled in each round. Specifically, the top-$\alpha$ fraction of edge devices with the largest $\ell_2$-norms of ${\boldsymbol{g}}_k(t,\boldsymbol{\theta})$ are selected. 
    
    \item \textbf{UAV-level Aggregation:}  
    Each UAV $u$ collects gradients from its served edge devices in $S_t$ and performs a weighted aggregation proportional to local dataset sizes $|D_{k}|$. Since hierarchical aggregation performs an exact sample-weighted averaging, it introduces no additional bias while reducing the communication overhead.

    \item \textbf{Global Base Model Update:}  
    A designated UAV aggregates the UAV-level results and applies a global update to the base model following Eq. (\ref{equ:inter_UAV_agg}). The updated global parameters are transmitted back to all UAVs through U2U communications, and subsequently to edge devices by each serving UAV.

    \item \textbf{Local Personalization:}  
    In parallel, each edge device refines its personalized parameters ${\boldsymbol{\phi}}_k$ by descending along its own gradient direction, following Eq. (\ref{equ:personalized_local_update}). Since these updates are never shared, each edge device preserves adaptability to its local data distribution.  
\end{enumerate}
The iterations proceed until either the maximum communication rounds are reached or the UAV’s cumulative energy consumption meets the budget.

\begin{algorithm}[t]
\caption{UAV-aided Personalized Federated Learning}
\label{alg:uav-pfl}
\begin{algorithmic}[1]
\REQUIRE Number of edge devices $K$, UAVs $U$, local datasets $\{D_{k}\}$, learning rates $\eta_t, \gamma_t$, selection ratio $\alpha$.
\STATE Initialize global base model $\boldsymbol{\theta}(0)$ and personalized parameters ${\boldsymbol{\phi}}_k(0)$ for all $k$
\FOR{communication round $t = 0,1,2,\dots,T^{\text{max}}$}
    \FOR{each device $k = 1, \dots, K$ \textbf{in parallel}}
        \STATE Compute stochastic gradients following Eq. (\ref{eq:device:parameter:update})
    \ENDFOR
    
    \STATE Select subset $S_t \subseteq \{1,\dots,K\}$ of size $\lceil \alpha K \rceil$ with largest $\|{\boldsymbol{g}}_k(t,\boldsymbol{\theta})\|_{2}$
    
    \FOR{each UAV $u = 1, \dots, U$}
        \STATE Collect base gradients from associated devices:
        $\{{\boldsymbol{g}}_k(t,\boldsymbol{\theta})\}_{k \in S_t \cap \mathcal{K}_u}$
        \STATE Compute the intra-UAV aggregated gradient according to
        Eq.~(\ref{equ:intra_UAV_agg})
    \ENDFOR
    \STATE Randomly select one UAV with $\mathbf{g}_u \neq \mathbf{0}$ as the designated UAV for inter-UAV aggregation.
    \STATE The designated UAV collects $\{\mathbf{g}_u\}_{u=1}^{U}$ and computes the global base update according to
    Eq.~(\ref{equ:inter_UAV_agg})

    \STATE Broadcast updated base model $\boldsymbol{\theta}(t+1)$ to all UAVs and devices
    
    \FOR{each device $k = 1, \dots, K$ \textbf{in parallel}}
        \STATE Update personalized parameters locally following Eq. (\ref{equ:personalized_local_update})
    \ENDFOR
\ENDFOR
\end{algorithmic}
\end{algorithm}

Algorithm~\ref{alg:uav-pfl} adopts a gradient-based top-$\alpha$ device selection strategy primarily to improve learning efficiency. However, this strategy does not explicitly consider UAV communication delay or energy consumption. To address these practical constraints, we further introduce a communication-aware device and UAV selection scheme, summarized in Algorithm~\ref{alg:uav-device-sel}. This algorithm determines both the designated aggregation UAV and the participating devices by jointly accounting for device learning performance and communication overhead. Note that Algorithm~\ref{alg:uav-device-sel} acts as an enhancement to Algorithm~\ref{alg:uav-pfl} and does not modify the underlying learning procedure itself.

\begin{algorithm}[t]
\caption{Device and designated UAV selection}
\label{alg:uav-device-sel}
\begin{algorithmic}[1]
\REQUIRE 
Gradient of each device in round $t$: $\|{\boldsymbol{g}}_k(t,\boldsymbol{\theta})\|_{2}$

\STATE Rank all devices according to $\|{\boldsymbol{g}}_k(t,\boldsymbol{\theta})\|_{2}$ in descending order

\STATE Compute cumulative importance score of each UAV
    \[
    I_u = \sum_{k \in \mathcal{K}_u} \text{rank}(k)
    \]

\FOR{each UAV $u = 1,\dots,U$}
    \STATE Compute UAV priority score
    \[
    \psi_{u'} = \frac{\tilde{H}_{u,u'}}{\tilde{I}_u}
    \]
    \STATE Sort the UAVs in descending order of $\psi_{u'}$
    \STATE Starting from $\mathcal{U}_u = \{u\}$, greedily add UAVs in order of $\psi_{u'}$ until $\mathcal{U}_u$ covers at least $\lceil \alpha K \rceil$ devices  
    \STATE Compute transmission rates and total delay $Q_{u} = \max \limits_{u'\in\mathcal{U}_u} T^{\mathrm{U2U}}_{u'} + T^{\mathrm{broadcast}}_u + \sum \limits_{u'\in\mathcal{U}_u}\!\! I_{u'} +   I_{u}$
\ENDFOR

\STATE \textbf{Designated UAV selection:}
\[
u^\star = \arg\min_{u} Q_{u}
\]

\STATE \textbf{Device selection:}
\STATE Select UAV set $\mathcal{U}_{u^\star}$ according to $\psi_u$
\STATE Construct device set
\[
S_t = \bigcup_{u \in  \mathcal{U}_{u^\star}} \mathcal{K}_u
\]
\STATE Keep the first $\lceil \alpha K \rceil$ devices in $S_t$
 \ENSURE Selected device set $S_t$, selected UAV subset $\mathcal{U}_{u^\star}$, and designated UAV $u^\star$.  
\end{algorithmic}
\end{algorithm}

The objective of Algorithm~\ref{alg:uav-device-sel} is to jointly select an informative device subset and suitable UAVs while determining a designated UAV that minimizes the UAV communication delay (i.e., energy consumption) in each training round. First, devices are ranked in descending order according to their gradient norms $\|{\boldsymbol g}_k(t,\boldsymbol{\theta})\|_2$, which quantify their learning importance. For each UAV $u$, a cumulative importance score $I_u=\sum_{k\in\mathcal{K}_u}\mathrm{rank}(k)$ is computed based on the ranks of its associated devices. A UAV priority score $\psi_{u'}=\tilde{H}_{u,u'}/\tilde{I}_u$ is then calculated by combining normalized communication quality and importance, and UAVs with the highest priority are grouped to form a subset $\mathcal{U}_u$ that covers at least $\lceil \alpha K \rceil$ devices. For each candidate UAV acting as aggregation leader, the total utility $Q_u$ is evaluated. The designated UAV is then chosen as $u^\star=\arg\min_u Q_u$. Finally, the device set is constructed as $S_t=\bigcup_{u\in\mathcal{U}_{u^\star}}\mathcal{K}_u$, from which the top $\lceil \alpha K \rceil$ ranked devices are retained, yielding the selected device set $S_t$, UAV subset $\mathcal{U}_{u^\star}$, and designated UAV $u^\star$ for the current round.

\subsection{Convergence Analysis}
We now analyze the convergence behavior of the proposed algorithm.

\paragraph{Convergence Guarantee}
Under standard smoothness assumptions and sufficiently small stepsizes, the proposed algorithm converges to a stationary point of the global objective. Specifically, after $T$ communication rounds, the averaged squared gradient norm satisfies
\begin{equation}
\frac{1}{T}\sum_{t=0}^{T-1}
\mathbb{E}\!\left[\left\|\nabla F\big(\boldsymbol{\theta}(t)\big)\right\|^{2}\right]
=
\mathcal{O}\!\left(\frac{1}{T}\right)
+
\mathcal{O}\!\left(\eta\, V_{\boldsymbol{\theta}}\right)
+
\mathcal{O}\!\left(\big(\beta_{\mathrm{sel}}+\bar{E}\big)^{2}\right),
\end{equation}
where $V_{\boldsymbol{\theta}} := \sigma_{\boldsymbol{\theta}}^2/|S_t| + \sigma_{\rm samp}^2 + \beta_{\rm sel}^2$, which captures both stochastic gradient variance and selection-induced bias. The three terms in the convergence bound correspond to the optimization error, stochastic variance, and the coupled bias induced by device selection and personalization, respectively.

\paragraph{Key Factors in Convergence}
The convergence behavior is influenced by three categories of parameters:
\begin{enumerate}
    \item Optimization parameters $\eta$,
    \item Stochastic variance terms $(V_{\boldsymbol{\theta}}, |S_t|)$, and
    \item Bias terms induced by device selection and personalization $(\beta_{\rm sel}, \bar E)$.
\end{enumerate}
Among these, the only system-level parameter that can be controlled during training is the device selection set $S_t$, whose size $|S_t|$ and composition are determined by the scheduling policy.

The bound highlights a variance--bias trade-off in device selection. Increasing $|S_t|$ reduces the stochastic variance term ${\sigma_{\boldsymbol{\theta}}^2}/{|S_t|}$,
but may increase the bias term$(\beta_{\rm sel}+\bar E)^2$
if devices with poorly aligned personalized parameters are included. Consequently, selecting all devices is not always optimal under communication and energy constraints.

The selection bias $\beta_{\rm sel}$ captures the discrepancy between the aggregated gradient over the selected devices and the full-participation gradient. By prioritizing devices with the largest $\ell_2$-norm gradients, the proposed top-$\alpha$ strategy effectively minimizes this discrepancy for a given communication budget, thereby reducing $\beta_{\rm sel}$ in the convergence bound.

While the analytical techniques in Appendix follow standard nonconvex FL analysis, the resulting convergence bound reveals a previously uncharacterized trade-off between device selection and personalization bias in UAV-aided FL, which directly motivates the proposed scheduling strategy.

\subsection{Complexity Analysis}
The computational complexity of Algorithm~\ref{alg:uav-pfl} can be analyzed in terms of the number of edge devices $K$, UAVs $U$, and model parameters. Let $M_{\boldsymbol{\theta}}$ and $M_{\boldsymbol{\phi}}$ denote the number of parameters in the global base model $\boldsymbol{\theta}$ and in each personalized head ${\boldsymbol{\phi}}_k$, respectively. Let $n_k$ be mini-batch size per local training step at an edge device $k$. Algorithm~\ref{alg:uav-pfl} consists of the following computational units.
\begin{enumerate}
    \item \textbf{Local gradient computation:} Each edge device computes gradients using its local data, with complexity
    \[
        \mathcal{O}\big(\tau \cdot n_k (M_{\boldsymbol{\theta}} + M_{\boldsymbol{\phi}})\big).
    \]
    Since edge devices operate in parallel, the per-round complexity is dominated by the largest $n_k$.

    \item \textbf{Top-$\alpha$ device selection:} Selecting the top-$\alpha$ edge devices by sorting them by $\ell_2$-norm of gradients has complexity
    \[
        \mathcal{O}(K \log K).
    \]

    \item \textbf{Intra-UAV aggregation:} Each UAV $u$ aggregates gradients from its associated edge devices. For $K_u$ devices connected to UAV $u$, the complexity is
    \[
        \mathcal{O}(K_u M_{\boldsymbol{\theta}}),
    \]
    resulting in a total of $\mathcal{O}(K M_{\boldsymbol{\theta}})$ across all UAVs.

    \item \textbf{Inter-UAV aggregation :} The designated UAV aggregates gradients from $U$ UAVs, with complexity
    \[
        \mathcal{O}(U M_{\boldsymbol{\theta}}).
    \]

    \item \textbf{Personalized parameter update:} Each edge device updates ${\boldsymbol{\phi}}_k$ locally, with complexity
    \[
        \mathcal{O}(M_{\boldsymbol{\phi}}),
    \]
    performed in parallel across all edge devices.
\end{enumerate}
Therefore, the verall complexity per communication round is
\[
\mathcal{O}\Big(\max_k \tau \cdot n_k (M_{\boldsymbol{\theta}} + M_{\boldsymbol{\phi}}) + K \log K + (K+U) M_{\boldsymbol{\theta}} + M_{\boldsymbol{\phi}} \Big).
\]

Here, the hierarchical aggregation and parallel updates enable efficient scaling with the number of edge devices $K$.

\section{Simulation Results}
The simulation results are presented in this part to evaluate the proposed algorithm for the UAV-aided personalized FL system. In the simulation setup, $12$ UAVs are deployed in a $10 \times 10\,\text{km}^2
$ area. They are arranged along three lines—horizontal, vertical, and diagonal—with approximately equal numbers per line, maintaining a minimum separation of $600\,\text{m}$ to avoid overlap and ensure balanced area coverage. The height of UAVs is 100m. The ground coverage of each UAV is a circular area of 200m in radius. 
For completeness, we note that the energy consumption of rotary-wing UAVs can generally be modeled as a function of UAV speed~\cite{8663615}. In our scenario, UAVs remain stationary throughout the entire learning process, i.e., the UAV speed is zero. Consequently, the propulsion and flight energy terms reduce to the hovering power $P_h$ used in our model. Following the parameters in Zeng~\cite{8663615}, we set $P_h = 52.1$ W.
The constant power gains factor $G$ in U2U channel model is $-31.5$dB\cite{zhang2019cellular}.
The devices in each UAV's coverage area is uniformly distributed and the number of served edge devices per UAV is set as 2. The GPU ability of each edge device $\nu$ is 500 samples per epoch per second. The channel parameters for D2U communications are set as $a = 11.95$ and $b=0.14$, $\eta _{NLoS} = 23$dB, $\eta _{LoS} = 3$dB \cite{8038869}. 
 
For performance evaluation, we adopt the CIFAR-10 dataset~\cite{krizhevsky2009learning}, which consists of 60,000 color images of size $32 \!\times\! 32$ pixels across 10 classes, with 50,000 training samples and 10,000 test samples. Following the partitioning strategy in~\cite{arivazhagan2019federated}, the dataset is distributed among edge devices such that each device holds data from at most $c$ classes, where $c$ controls the degree of data heterogeneity. To eliminate the influence of dataset size on the gradients, all devices are assigned datasets of equal size. This setup ensures a fair comparison when evaluating the proposed scheme (hereafter referred to as \textbf{Proposed}).
The learning model is a convolutional neural network (CNN) consisting of two convolutional layers and three fully connected layers. The final fully connected layer is designated as the personalized parameter set. The size of the shared parameters $M_{\boldsymbol{\theta}}$ is $169{,}000 \times 8$ bits, while the total model size is $172{,}400 \times 8$ bits. Hence, the parameters of the personalized part account for only about $2\%$ of the total model size.
The batch size of local training is 10. The SGD optimizer is employed with a learning rate, i.e., $\eta = 0.005$ and $\gamma = 0.005$. All simulations are executed on an NVIDIA RTX 4090 GPU. 

Unless otherwise specified, the baseline comparisons employ the same top-$k$
selection strategy as Algorithm~1 to ensure fair evaluation of learning
performance. Algorithm~2 is separately evaluated to demonstrate improvements
in communication efficiency.

The major simulation parameters are listed in Table~\ref{table:sim-para}. \par 
\begin{table}[htbp]
\centering
\caption{Simulation Parameters}
\label{table:sim-para}
\begin{tabular}{ll}
\toprule
\textbf{Parameter} & \textbf{Value} \\
\midrule
Carrier frequency & 2 GHz \\
Bandwidth & 80 MHz \\
CPU cycles of UAV $q_u$ & 20,000 \cite{jiang2024over} \\
CPU frequency of UAV $\boldsymbol{\phi}_u$ & 3 GHz \\
Energy consumption coefficient $\kappa$ & $10^{-27}$ \\
Energy budget of UAV $\hat{E}$ & 40 kJ \\
Noise power $\sigma^2$ & $-174$ dBm/Hz \\
Hover power of UAV  \\
Communication power of UAV $P_{\max}^u$ & 5 W \\
Communication power of device $P_{\max}^k$ & 23 dBm \\
Local epoch of device $\tau$ & 1 \\
Maximum communication rounds $T^{\max}$ & 210 \\
\bottomrule
\end{tabular}
\end{table}

The following existing schemes are compared with ours to show the efficiency of our proposed scheme in the general UAV-aided FL architecture. 
\begin{itemize}
    \item \textbf{Grad-Based FedAvg}~\cite{jiang2024over}: schedules local devices corresponding to the top half of all devices in terms of gradient $\ell_2$-norms and updates the global model via FedAvg.
    \item \textbf{Inter-UAV FedAvg}~\cite{10550002}: each UAV aggregates models from devices within its communication range, then a designated UAV collects these aggregated models from all other UAVs to produce a global model via FedAvg. The global model is subsequently broadcast to all UAVs via U2U communication, and each UAV then distributes it to its served devices.
    \item \textbf{Intra-UAV FedAvg}~\cite{9453811}: each UAV aggregates local models from its served devices to produce a global model via FedAvg, which is then directly broadcast to its served devices.
    \item \textbf{Reliable Personalized FL (RIPFL)}~\cite{10204293}: aggregates personalized global models for each local device based on cosine similarity. The number of aggregated models depends on the device uncertainty: higher uncertainty leads to aggregation of more models.
    \item \textbf{Sequential FedPer}~\cite{arivazhagan2019federated}: differs from the proposed scheme only in that edge devices are scheduled in a round-robin sequential manner with the same number of models as in the proposed scheme, rather than based on gradient $\ell_2$-norms.
    \item \textbf{Random FedPer}~\cite{arivazhagan2019federated}: differs from the proposed scheme only in that edge devices are scheduled randomly with the same number of models as in the proposed scheme, rather than based on gradient $\ell_2$-norms.
\end{itemize}

Although the theoretical convergence rates characterize worst-case behavior, we empirically examine whether the predicted trends hold in practice. In particular, we study the impact of device selection size on convergence speed. As predicted by Theorem~\ref{thm:nonconvex}, selecting a moderate number of devices achieves faster convergence than both full participation and local-only training, confirming the bias–variance trade-off revealed by the analysis. The result in Fig.~\ref{fig:robustness-mobility-high}, Fig.~\ref{fig:robustness-mobility-low}, and Fig.~\ref{fig:impact of FL} verify that convergence behavior under different device participation levels.
\begin{figure}[htpb]
	\centering
\subfloat[Fixed device locations]{
    \includegraphics[width=2.5in]{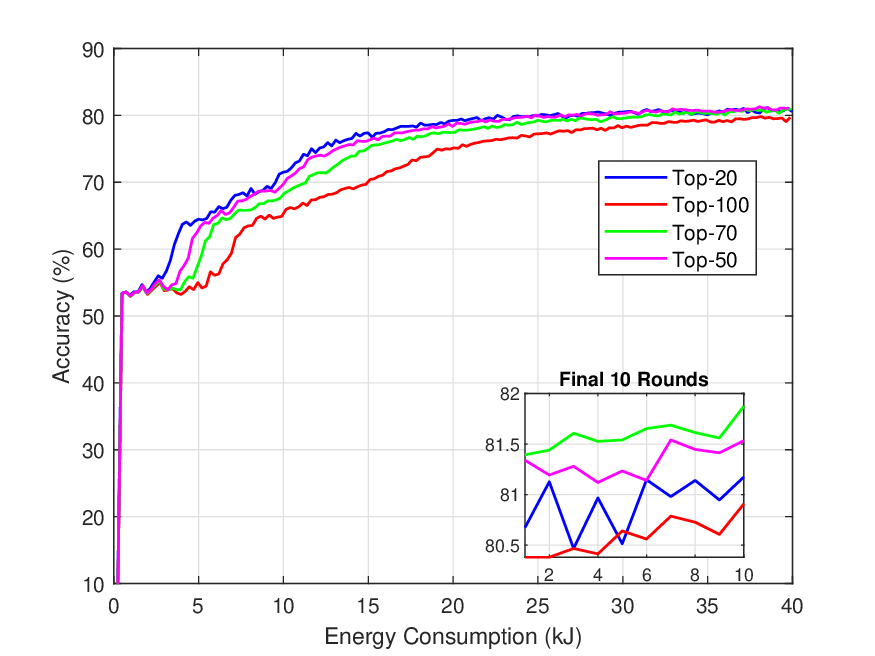}}\\
\subfloat[Mobile device locations]{
    \includegraphics[width=2.5in]{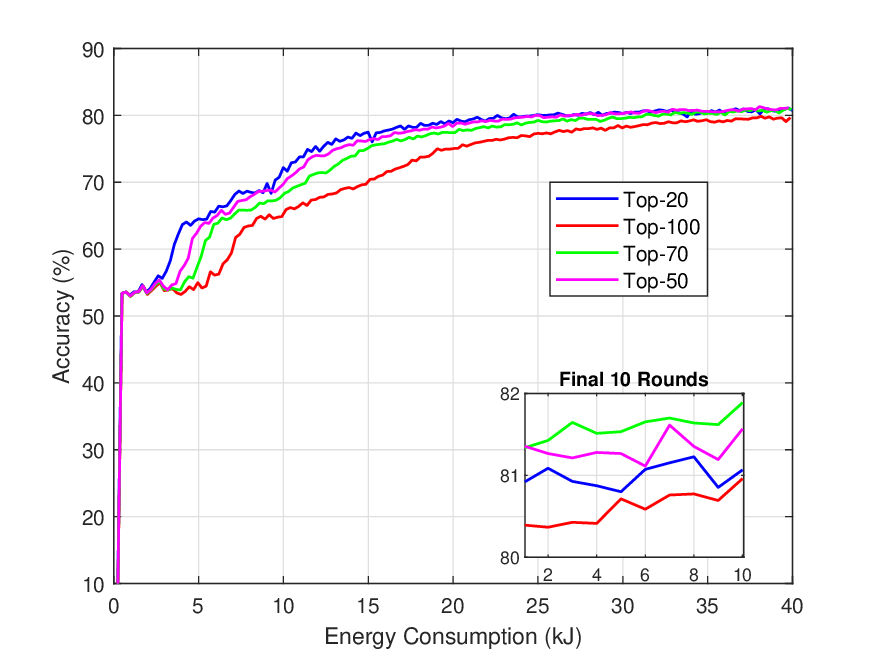}}

\caption{Robustness of the proposed scheme under device mobility in high-ratio device scheduling.}
\label{fig:robustness-mobility-high}
\end{figure} 

Fig.~\ref{fig:robustness-mobility-high} illustrates the robustness of the proposed top-$\alpha$ scheduling scheme under device mobility in the data-heterogeneous scenario with $c=6$. Here, top-$\alpha$ denotes that only the top $\alpha \%$ fraction of devices, ranked by the gradient $\ell_2$-norm, are scheduled for global aggregation in each round. Specifically, Fig.~\ref{fig:robustness-mobility-high}(a) corresponds to the case of fixed device locations, where each device remains at its initially generated position throughout the training process, while Fig.~\ref{fig:robustness-mobility-high}(b) corresponds to mobile device locations, where device positions are randomly generated within the UAV coverage area in every communication round, independently across rounds.  

From both figures, we observe that the convergence speed increases as the scheduling ratio $\alpha$ decreases (i.e., $\text{Top-100} < \text{Top-70} < \text{Top-50} < \text{Top-20}$).  From the inset of Fig.~\ref{fig:robustness-mobility-high}, full-participation scheduling consistently achieves lower accuracy by approximately $1\%$ than partial scheduling with smaller $\alpha$, under both fixed and mobile device location scenarios. This observation is consistent with our theoretical convergence analysis. Specifically, by selecting only the top-$\alpha$ devices according to the $\ell_2$-norm of their gradients, the global backbone update is guided by the most informative optimization directions, while the influence of noisy or low-quality updates is suppressed. Consequently, the variance of the aggregated gradient is reduced, resulting in faster convergence.

Furthermore, since the comparison is made in terms of energy consumption versus test accuracy, it is evident that the proposed top-$\alpha$ scheme consistently achieves higher accuracy with lower energy expenditure compared to scheduling all edge devices. This is because scheduling fewer but more informative devices reduces both communication overhead and U2U transmission cost. 

\begin{figure}[htpb]
	\centering
\subfloat[Fixed device locations]{
    \includegraphics[width=2.5in]{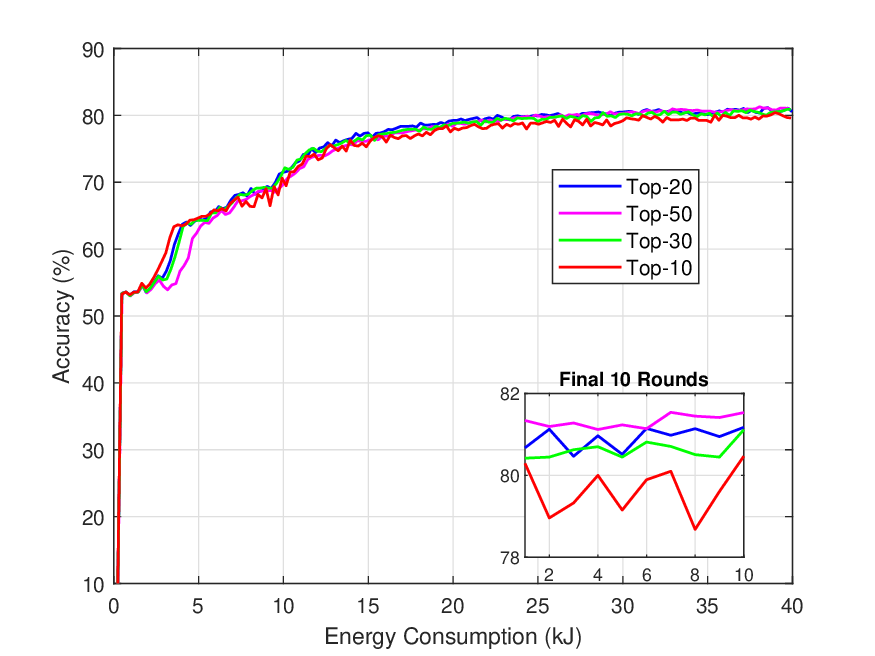}}\\
\subfloat[Mobile device locations]{
    \includegraphics[width=2.5in]{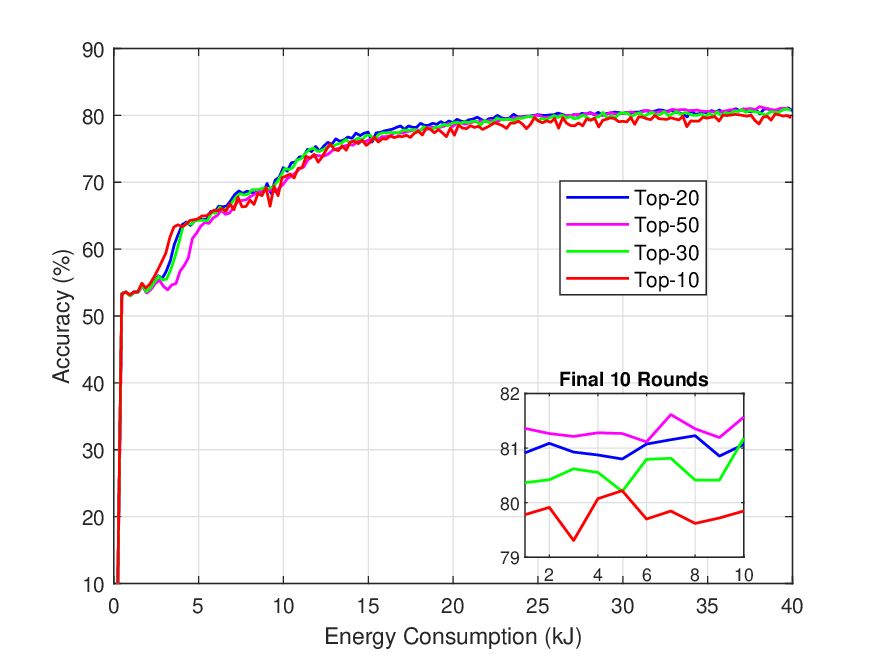}}

\caption{Robustness of the proposed scheme under device mobility in low-ratio device scheduling.}
\label{fig:robustness-mobility-low}
\end{figure} 

Fig.~\ref{fig:robustness-mobility-low} illustrates the robustness of the proposed top-$\alpha$ scheduling scheme under device mobility in the low-ratio scenario, with data heterogeneity set to $c=6$, consistent with Fig.~\ref{fig:robustness-mobility-high}. It can be observed that Top-50, Top-30, and Top-20 exhibit comparable convergence speeds, whereas further reducing the scheduling ratio to Top-10 leads to degraded performance compared with Top-20.  From the inset of Fig.~\ref{fig:robustness-mobility-low}, the Top-10 scheme achieves approximately $1.5\%$ lower accuracy than Top-50 and about $1\%$ lower accuracy than Top-20. This performance degradation arises because selecting only the top $10\%$ of devices may exclude moderately informative updates, thereby weakening the variance reduction effect in gradient aggregation. Among all evaluated scheduling ratios, Top-20 achieves the best tradeoff by enabling fast convergence while avoiding aggregation from less informative devices, thus improving both learning performance and energy efficiency.

Combining the observations from Fig.~\ref{fig:robustness-mobility-high} and Fig.~\ref{fig:robustness-mobility-low}, the results validate the advantage of the proposed gradient-norm--based scheduling strategy: concentrating aggregation on the most informative devices accelerates convergence, reduces communication overhead, and ensures robust performance under both fixed and mobile device locations. In all subsequent evaluations, the device mobility scenario is assumed, and the scheduling ratio $\alpha$ is set to 20\%.

\begin{figure}[htpb]
	\centering
	\includegraphics[width=2.5in]{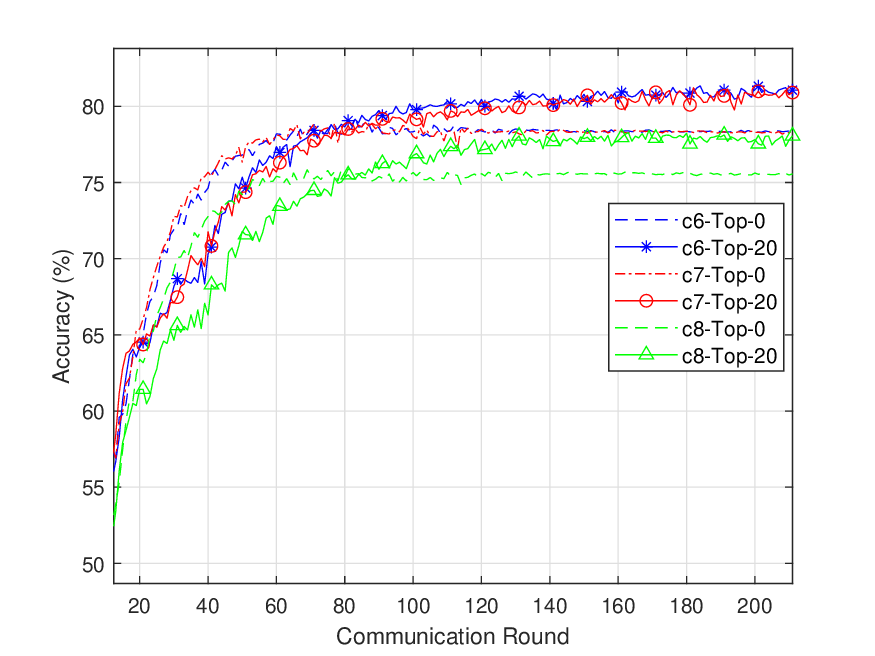}\\
\caption{Impact of FL on the test accuracy under different levels of data heterogeneity.}
\label{fig:impact of FL}
\end{figure} 

Fig.~\ref{fig:impact of FL} compares the test accuracy of devices with and without FL under different levels of data heterogeneity ($c=6$, $c=7$, and $c=8$). Here, Top-0 corresponds to the case where no devices are scheduled for global aggregation, and all devices rely solely on local training without collaboration. In contrast, Top-20 enables federated aggregation among selected devices. It is observed that Top-20 consistently achieves significantly higher test accuracy than Top-0 across all heterogeneity levels. These results confirm that, under the considered heterogeneous settings, FL substantially enhances learning performance compared to purely local training.
The consistent performance gain across different values of $c$ further indicates that the benefit of FL is robust to variations in data heterogeneity.

\begin{figure*}[htbp]
    \centering
    \subfloat[$c=6$]{
        \includegraphics[width=0.32\textwidth]{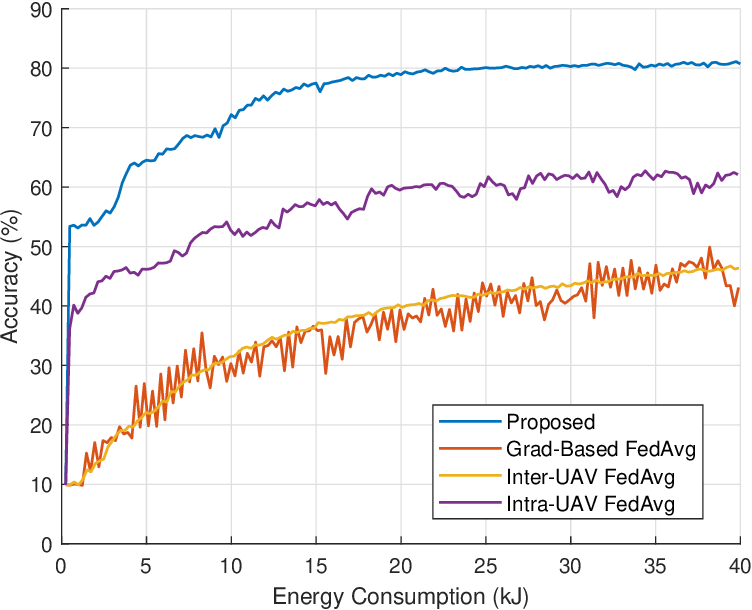}}
    \hfill
    \subfloat[$c=7$]{
        \includegraphics[width=0.32\textwidth]{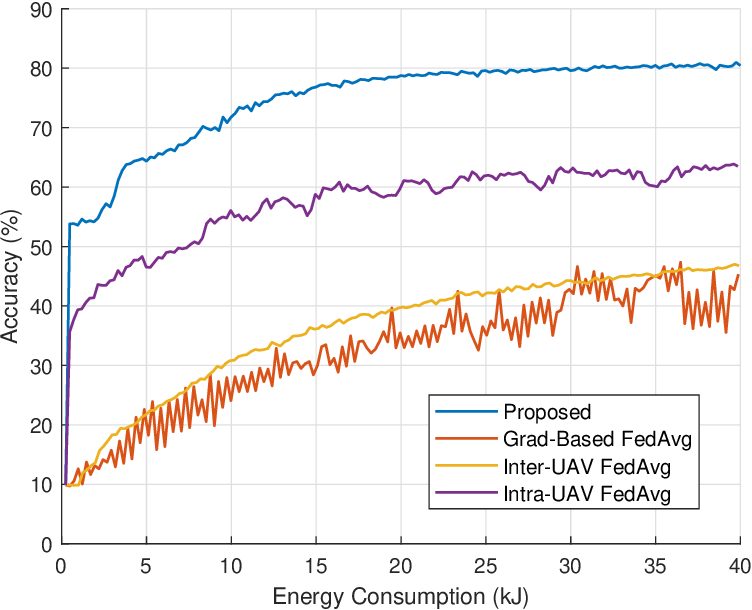}}
    \hfill
    \subfloat[$c=8$]{
        \includegraphics[width=0.32\textwidth]{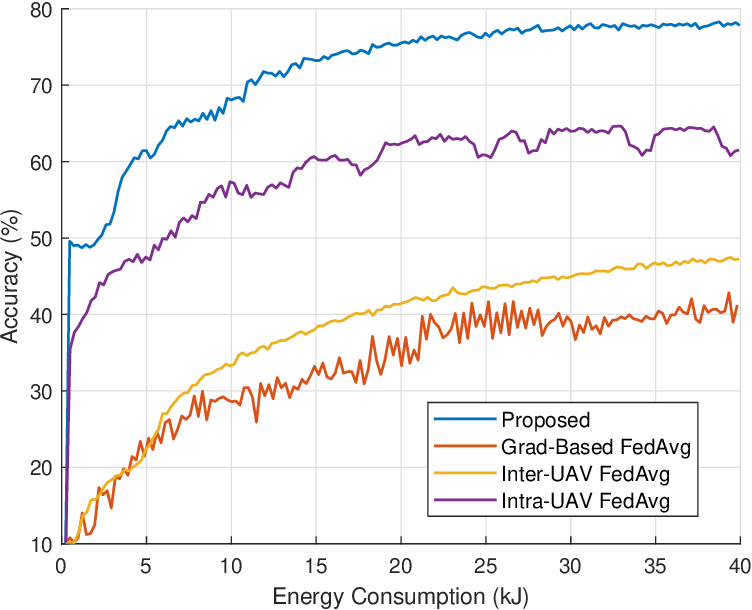}}
    \caption{Comparison of performance under different UAV-aided FL schemes.}
    \label{fig:impact of FedPer}
\end{figure*}

Fig.~\ref{fig:impact of FedPer} compares the proposed scheme with \textbf{Grad-Based FedAvg}~\cite{jiang2024over}, \textbf{Inter-UAV FedAvg}~\cite{10550002}, and \textbf{Intra-UAV FedAvg}~\cite{9453811} under different levels of data heterogeneity ($c=6$, $c=7$, and $c=8$). The proposed method consistently outperforms these state-of-the-art UAV-aided FL approaches. Unlike conventional schemes that update the entire model via FedAvg, the performance gain of the proposed method stems from its \emph{shared-backbone with local personalization} design.  

In heterogeneous data scenarios, the global backbone—trained primarily from strong gradient signals—serves as a robust feature extractor across devices, while the lightweight device-specific heads adapt to local distributional variations. This separation mitigates negative transfer, since the backbone is not forced to overfit each device’s private data distribution, thereby improving per-device accuracy. Moreover, the backbone is updated using the most ``active'' devices in each round, effectively steering representation learning toward the most informative features in the population. The personalized layers then absorb device-specific characteristics (e.g., label bias or stylistic variations), substantially alleviating the detrimental impact of data heterogeneity.

\begin{figure*}[htbp]
    \centering
    \subfloat[$c=6$]{
        \includegraphics[width=0.32\textwidth]{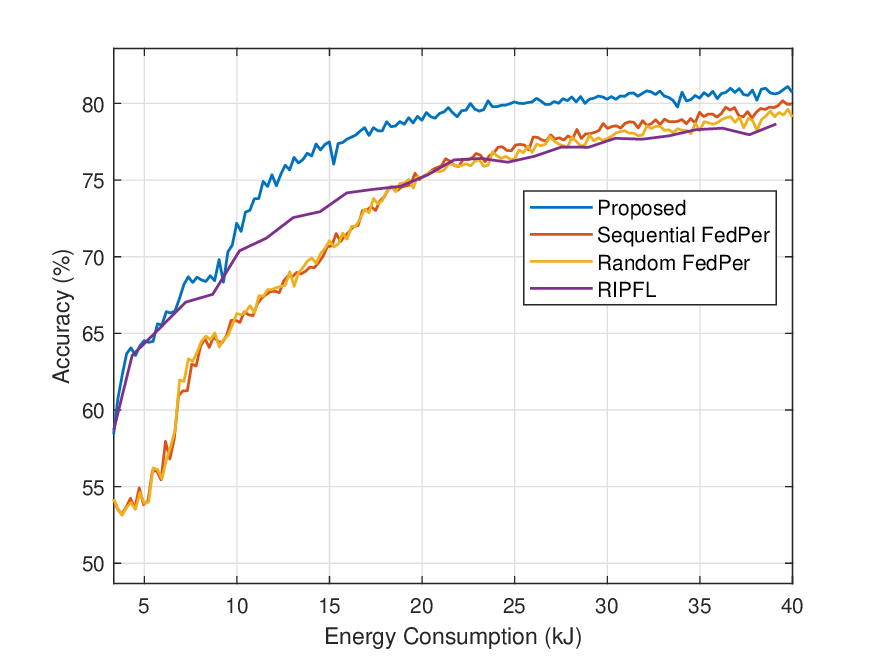}}
    \hfill
    \subfloat[$c=7$]{
        \includegraphics[width=0.32\textwidth]{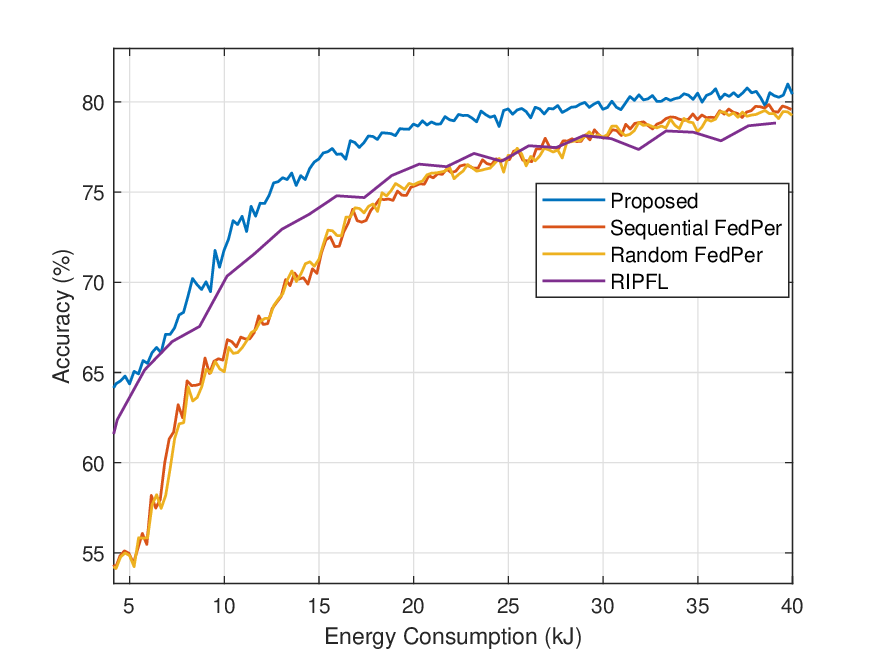}}
    \hfill
    \subfloat[$c=8$]{
        \includegraphics[width=0.32\textwidth]{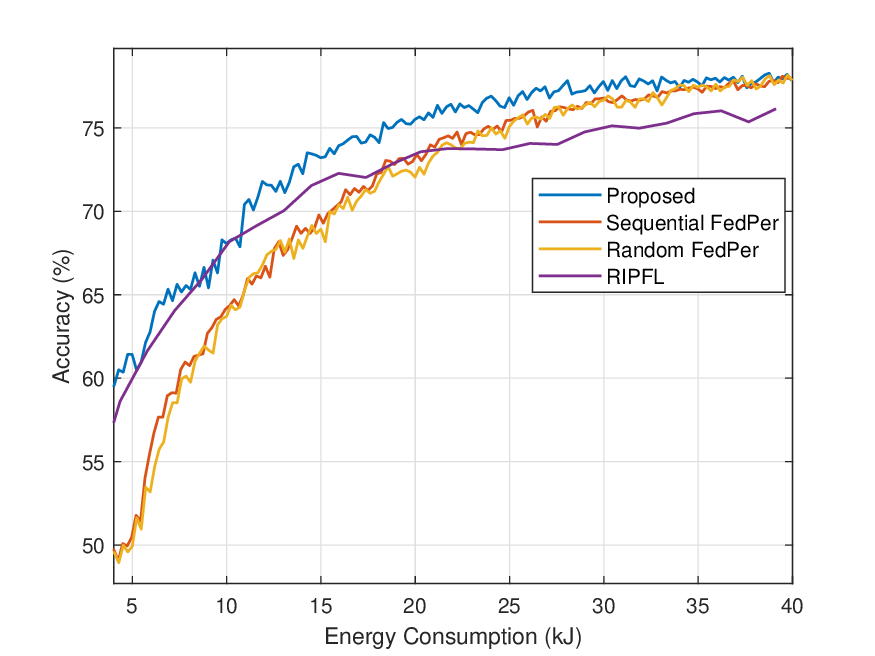}}
    \caption{Comparison of different device scheduling schemes for the case of one local epoch ($\tau=1$).}
    \label{fig:impact of grad-based scheduling-e1}
\end{figure*}

Fig.~\ref{fig:impact of grad-based scheduling-e1} compares the impact of different device scheduling schemes on convergence speed under varying levels of data heterogeneity, i.e., $c=6$, $c=7$, and $c=8$. It is observed that the proposed scheduling scheme consistently achieves faster convergence than the baselines. The underlying reason is that by aggregating only the top 20$\%$ of devices with the largest gradient norms for the backbone, the shared update is biased toward devices providing the strongest learning signals. This avoids diluting the global direction with weak or conflicting gradients, thereby accelerating the representation learning of shared features.  

In contrast, the similarity-based scheduling strategy adopted in \textbf{RIPFL}~\cite{10204293} exhibits slower convergence. \textbf{RIPFL} constructs an individual global model for each device by aggregating others' updates and then computing pairwise model similarity. This mechanism introduces substantially higher communication overhead compared with the centralized aggregation in the proposed scheme, particularly in multi-UAV aided FL systems deployed over large geographical areas. Moreover, as the task complexity increases (i.e., larger $c$), the early training rounds become more uncertain, making model similarity more prone to providing misleading scheduling signals.  

By comparison, the proposed gradient-norm based selection directly exploits an optimization-relevant indicator—the gradient magnitude—which is causally related to learning progress. This offers a more reliable criterion for identifying devices with informative updates, thereby reducing redundant communication, improving scheduling efficiency, and accelerating global model convergence.

\begin{figure*}[htbp]
    \centering
    \subfloat[$c=6$]{
        \includegraphics[width=0.32\textwidth]{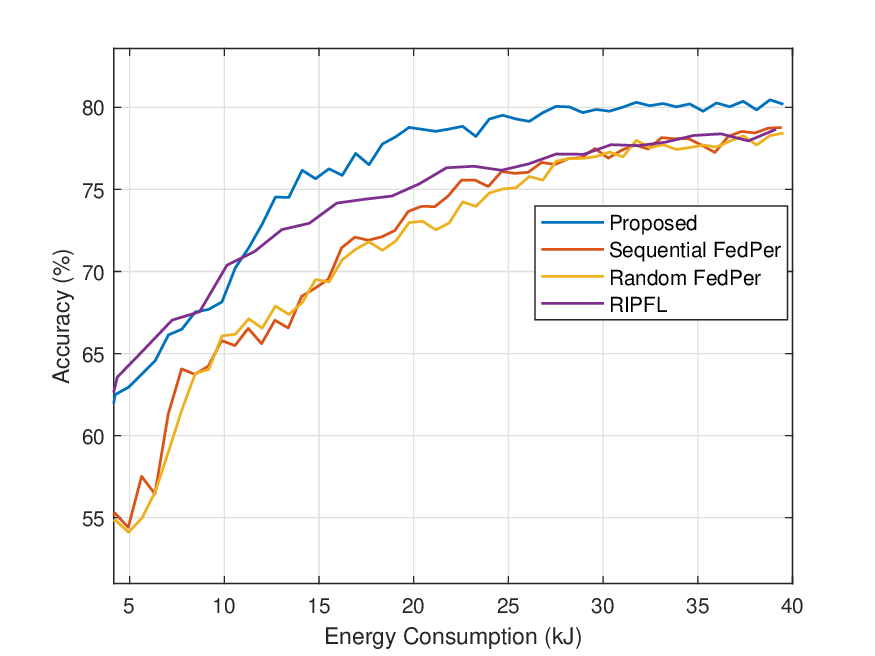}}
    \hfill
    \subfloat[$c=7$]{
        \includegraphics[width=0.32\textwidth]{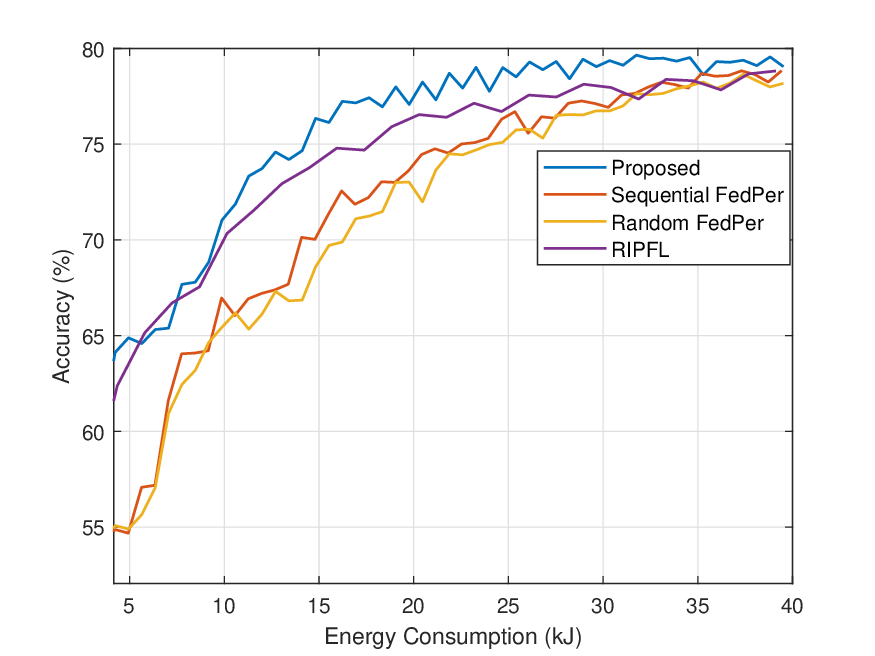}}
    \hfill
    \subfloat[$c=8$]{
        \includegraphics[width=0.32\textwidth]{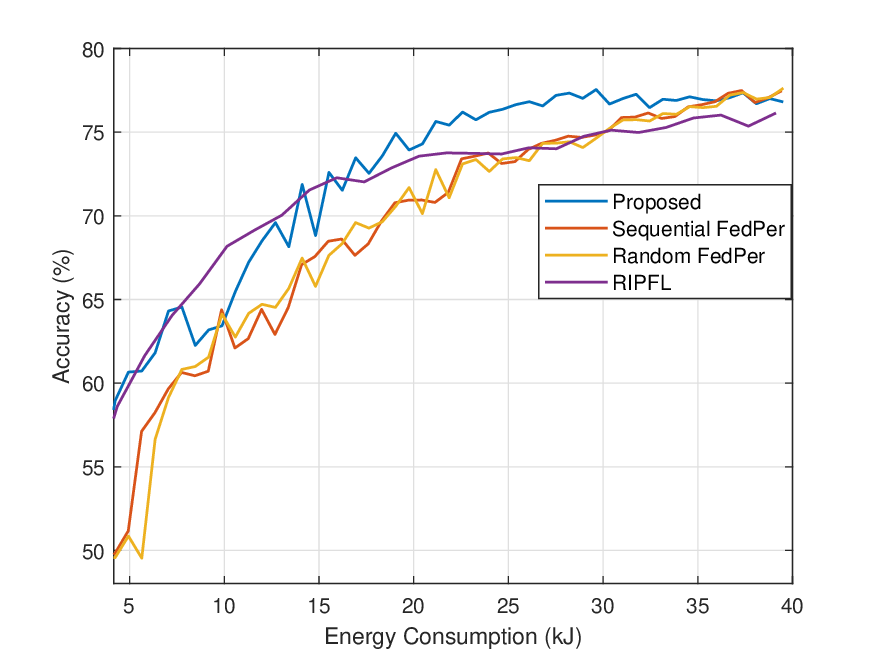}}
    \caption{Comparison of different device scheduling schemes for the case of three local epochs ($\tau=3$).}
    \label{fig:impact of grad-based scheduling-e3}
\end{figure*}

Fig.~\ref{fig:impact of grad-based scheduling-e3} presents the same comparison of scheduling schemes as in Fig.~\ref{fig:impact of grad-based scheduling-e1}, but under the setting where the local epoch is $3$. For reference, in Fig.~\ref{fig:impact of grad-based scheduling-e1}, the local epoch was set to $1$. 
Since \textbf{RIPFL} exhibits severely degraded performance when the local epoch is set to $1$, we retain the setting of $6$ epochs for \textbf{RIPFL}. The cases for \textbf{RIPFL} with $3$ and $10$ local epochs show slightly worse convergence compared to the $6$-epoch setting. Therefore, for clarity and fairness of comparison, we report the results of \textbf{RIPFL} with $6$ local epochs, while the other three schemes are evaluated with $3$ local epochs.

From Fig.~\ref{fig:impact of grad-based scheduling-e3}, we observe that the learning performance of the proposed scheme is only weakly affected by the number of local epochs. Although increasing the local epoch may exacerbate client drift in heterogeneous settings, the resulting performance degradation is negligible for the proposed method. This suggests that the gradient-based scheduling mechanism effectively controls the adverse effects of excessive local updates. Furthermore, for all examined levels of data heterogeneity, the proposed approach consistently converges faster than the compared schemes, highlighting its robustness and stability across a wide range of system configurations.

\begin{figure}[htpb]
	\centering
	\subfloat[]{
		\includegraphics[width=2.5in]{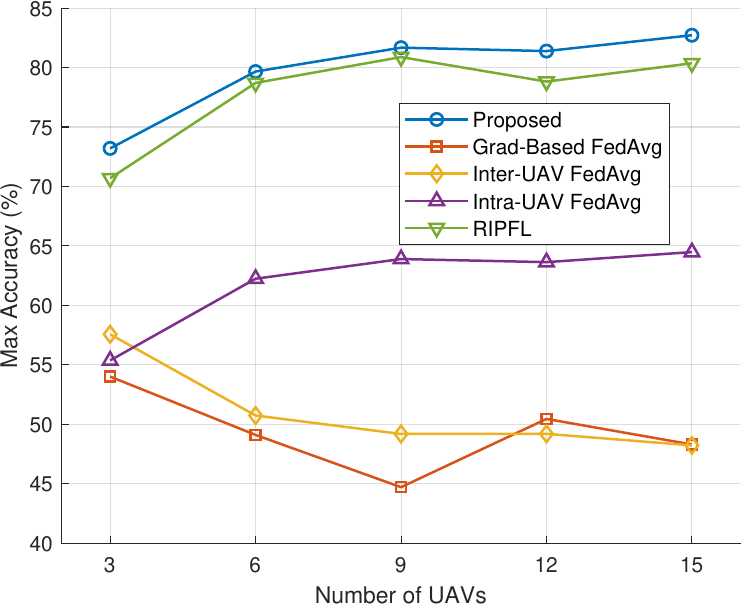}}\\
	\subfloat[]{
		\includegraphics[width=2.5in]{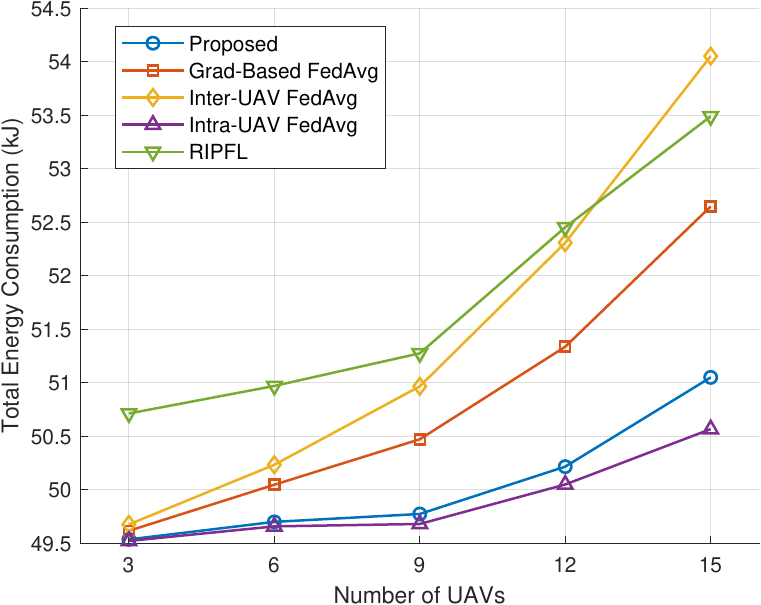}}
	\caption{Comparison of different schemes with varying number of UAV.}
    \label{fig:impact of numUAV}
\end{figure} 
Fig.~\ref{fig:impact of numUAV} illustrates the performance of the proposed scheme under data heterogeneity $c=6$ with varying numbers of UAVs. Fig.~\ref{fig:impact of numUAV}(a) presents the maximum accuracy achieved after $210$ local training epochs, while Fig.~\ref{fig:impact of numUAV}(b) shows the corresponding total energy consumption. It can be observed that regardless of the number of UAVs, the accuracy of the proposed scheme consistently outperforms the baseline methods while requiring only a modest amount of energy. Although the \textbf{Intra-UAV FedAvg}~\cite{9453811} achieves slightly lower energy consumption than the proposed scheme, its accuracy is significantly inferior. On the other hand, \textbf{RIPFL} yields slightly lower accuracy than the proposed scheme but at the cost of substantially higher energy consumption. Furthermore, as the number of UAVs increases, the performance gap between the proposed scheme and \textbf{RIPFL} further widens. This is because the gradient-based top-$\alpha$ scheduling becomes more effective with more UAVs: a larger set of candidate devices per round allows for the selection of more informative devices based on gradient $\ell_{2}$-norm, thereby reducing gradient variance and improving global model convergence. Consequently, the proposed scheme simultaneously enhances learning accuracy and energy efficiency as the UAV fleet size grows.

The above results are obtained using only the top-$\alpha$ device scheduling scheme, with UAVs selected randomly (denoted as “Max-Grad + Random UAV” in Fig.~\ref{fig:opt-UAVcomm-sim}). To evaluate the performance of the proposed joint device and UAV selection optimization in Algorithm~\ref{alg:uav-device-sel} (denoted as “Grad-Energy Tradeoff” in Fig.~\ref{fig:opt-UAVcomm-sim}), we consider the following baselines. “Max-Grad + Energy-Opt” refers to the approach where devices are first selected according to the top-$\alpha$ gradient criterion, and then the designated UAV for aggregation is chosen to minimize energy consumption based on UAV communication delay. “Energy-Opt Only” represents the strategy where UAVs are selected solely to minimize energy consumption, and device selection is subsequently performed based on the available UAVs.

From Fig.~\ref{fig:opt-UAVcomm-sim}, it can be observed that the proposed selection optimization scheme achieves over $3\%$ higher learning accuracy compared to the ``Energy-Opt Only'' scheme, with an increase of less than 0.1~kJ in energy consumption. Furthermore, the proposed scheme reaches approximately the same maximum accuracy while consuming less energy. These results validate the effectiveness of the proposed device and UAV selection scheme in balancing learning performance and energy consumption.

To further evaluate the proposed algorithm in a realistic UAV-aided disaster-resilient scenario, we consider a UAV-covered area where devices are distributed according to a homogeneous Poisson point process (PPP) with an intensity of $0.5\times10^{-4}$ devices/m\textsuperscript{2} in the 2-dimensional Euclidean space \cite{hydher2020intelligent}. As shown in Fig.~\ref{fig:opt-UAVcomm-real}, the proposed device and UAV selection scheme again demonstrates its efficiency in trading off learning performance and energy consumption.

\begin{figure}[htpb]
	\centering
\subfloat[]{
    \includegraphics[width=2.5in]{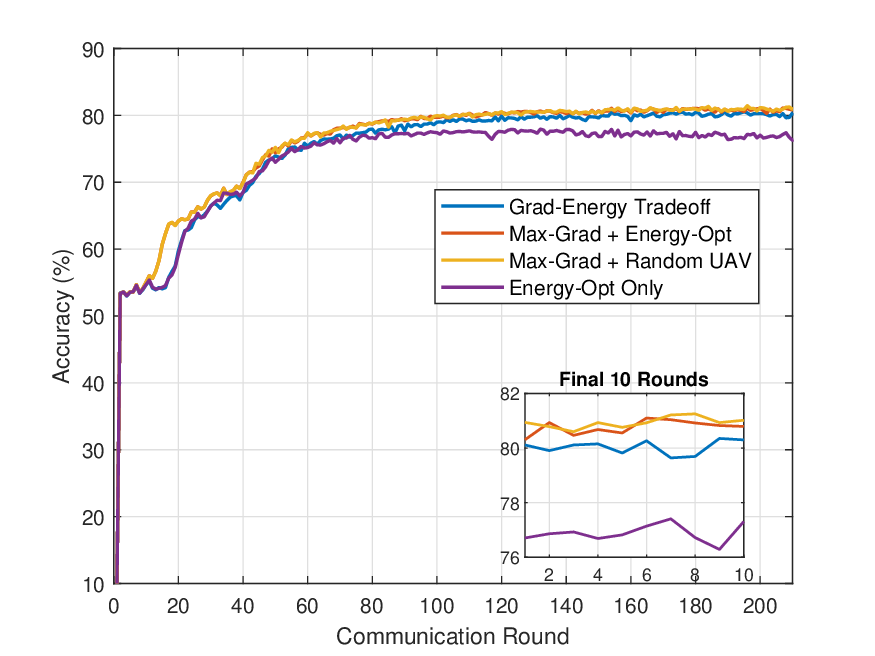}}\\
\subfloat[]{
    \includegraphics[width=2.5in]{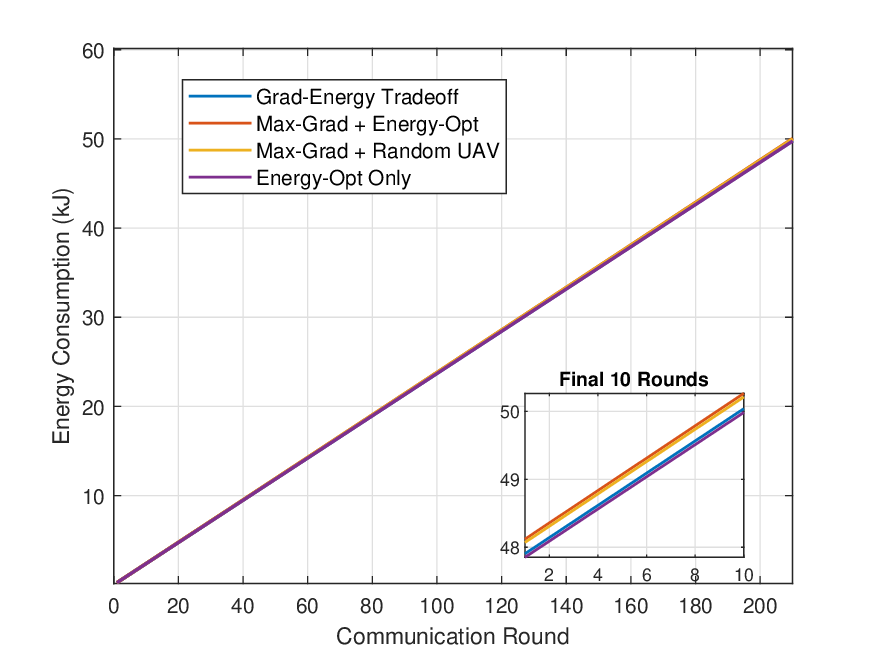}}

\caption{Performance comparison of selection optimization under uniform device distribution.}
\label{fig:opt-UAVcomm-sim}
\end{figure}

\begin{figure}[htpb]
	\centering
\subfloat[]{
    \includegraphics[width=2.5in]{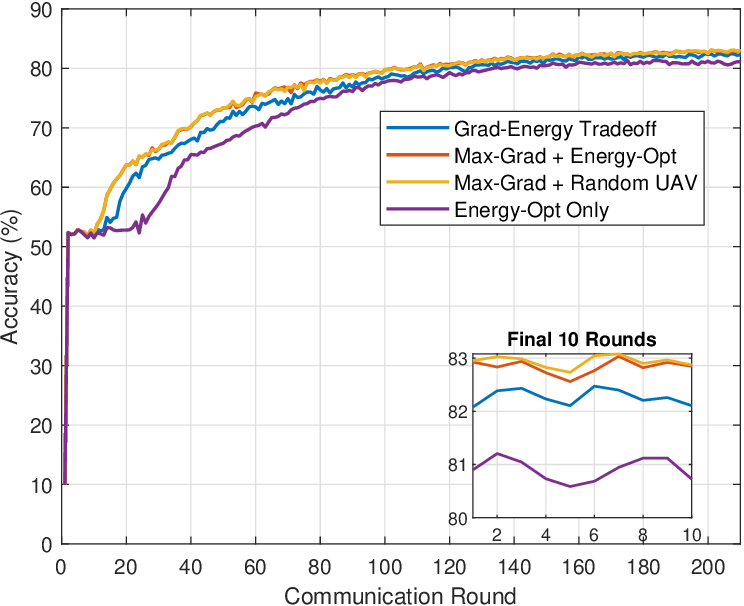}}\\
\subfloat[]{
    \includegraphics[width=2.5in]{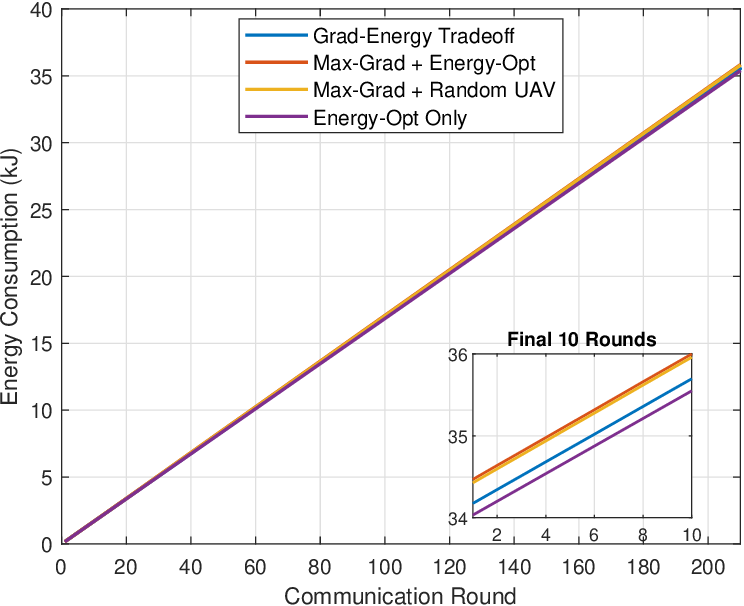}}

\caption{Performance comparison of selection optimization under PPP device distribution.}
\label{fig:opt-UAVcomm-real}
\end{figure}

\section{Conclusion}
In this work, we investigated a UAV-aided personalized FL system under data heterogeneity. By adopting a strict separation between the globally shared backbone and the permanently local personalization heads, the proposed framework effectively addressed the challenges associated with heterogeneous data distributions. To further balance the trade-off between selection bias and personalization bias, while reducing communication overhead and energy consumption, a top-$\alpha$ device scheduling strategy based on the largest gradient $\ell_{2}$-norm was developed. Simulation results demonstrated that the proposed scheme consistently outperforms existing UAV-aided FL approaches across different levels of data heterogeneity, achieving both higher accuracy and lower energy cost. 

For future work, this study will be extended to the design of energy-efficient resource allocation schemes for UAV-enabled model exchange in personalized FL systems. In addition, trajectory or position optimization of UAVs is an interesting and complementary direction that can further enhance communication quality and learning performance, which we leave for future investigation.

%
\appendix[Convergence Analysis of UAV-aided personalized FL]

Since intra-/inter-UAV steps realize an exact sample-weighted average, they introduce no additional bias. Therefore, the UAV-aided aggregation performs the sample-weighted average can also express
\begin{align*}
    &\overline{\mathbf{g}}_{S_t}{(\,t,\boldsymbol{\theta})}
    :=\frac{\sum_{k\in S_t}|D_k|\,{\boldsymbol{g}}_k(t,\boldsymbol{\theta})}{\sum_{k\in S_t}|D_k|}, \\
    &\boldsymbol{\theta}(t+1)=\boldsymbol{\theta}(t)-\eta_t\,\overline{\mathbf{g}}_{S_t}{(\,t,\boldsymbol{\theta})},
\end{align*}
and the personalized parameters are updated locally (never aggregated):
\begin{equation}
    {\boldsymbol{\phi}}_k(t+1)={\boldsymbol{\phi}}_k(t)-\gamma_t\,{\mathbf{g}}_k(t,{\boldsymbol{\phi}}).
    \label{eq:phi-update}
\end{equation} 
\subsubsection*{Decomposition and Bias Definitions}
Let
\begin{align*}
&{\mathbf{g}}_k^\star(\boldsymbol{\theta}):=\nabla_{\boldsymbol{\theta}} F_k\!\big(\boldsymbol{\theta},{\boldsymbol{\phi}}_k^\star(\boldsymbol{\theta})\big), \\
    &\Delta_k(t):={\boldsymbol{\phi}}_k(t)-{\boldsymbol{\phi}}_k^\star(\boldsymbol{\theta}(t)).
\end{align*}
By adding and subtracting $\nabla_{\boldsymbol{\theta}} F_k(\boldsymbol{\theta}(t),{\boldsymbol{\phi}}_k^\star(\boldsymbol{\theta}(t)))$,
\begin{equation}
\begin{aligned}
    &{\mathbf{g}}_k(t,\boldsymbol{\theta})={\mathbf{g}}_k^\star(\boldsymbol{\theta}(t))+\mathbf{e}_k(t)+{\mathbf{\zeta}}_k(t),\\
    &\mathbf{e}_k(t):=\nabla_{\boldsymbol{\theta}} F_k(\boldsymbol{\theta}(t),{\boldsymbol{\phi}}_k(t))-\nabla_{\boldsymbol{\theta}} F_k(\boldsymbol{\theta}(t),{\boldsymbol{\phi}}_k^\star(\boldsymbol{\theta}(t))).
     \end{aligned}
    \label{eq:grad-decomp}
\end{equation}
We use the averaged quantities (weights proportional to $|D_k|$):
\begin{equation}
\begin{aligned}
    \bar {\mathbf{g}}_{S_t,\star} &:= \frac{\sum_{k\in S_t} |D_k|\, {\mathbf{g}}_k^\star(\boldsymbol{\theta}(t))}{\sum_{k\in S_t} |D_k|}, \\
    \bar {\mathbf{e}}_t &:= \frac{\sum_{k\in S_t} |D_k|\, \mathbf{e}_k(t)}{\sum_{k\in S_t} |D_k|}, \\
    \bar {\mathbf{\zeta}}_t &:= \frac{\sum_{k\in S_t} |D_k|\, {\mathbf{\zeta}}_k(t)}{\sum_{k\in S_t} |D_k|}.
\end{aligned}
\label{eq:averages}
\end{equation}

Note $\nabla F(\boldsymbol{\theta})=\frac{1}{K}\sum_{k=1}^K {\mathbf{g}}_k^\star(\boldsymbol{\theta})$.
\begin{assumption}[Smoothness and Cross-Lipschitz]\label{ass:smooth}
For each $k$, $F_k(\boldsymbol{\theta},{\boldsymbol{\phi}}_k)$ is $L_{\boldsymbol{\theta}}$-smooth in $\boldsymbol{\theta}$, $L_{\boldsymbol{\phi}}$-smooth in ${\boldsymbol{\phi}}_k$, and
\[
\|\nabla_{\boldsymbol{\theta}} F_k(\boldsymbol{\theta},{\boldsymbol{\phi}})-\nabla_{\boldsymbol{\theta} }F_k(\boldsymbol{\theta},{\boldsymbol{\phi}}')\|\le L_{\boldsymbol{\theta}{\boldsymbol{\phi}}}\|{\boldsymbol{\phi}}-{\boldsymbol{\phi}}'\|.
\]
The outer objective $F$ in \eqref{eq:outer-problem} is $L_{\boldsymbol{\theta}}$-smooth.
\end{assumption}

\begin{assumption}[Inner Polyak–Łojasiewicz (PL) \& Stability]\label{ass:inner}
For fixed $\boldsymbol{\theta}$, $F_k(\boldsymbol{\theta},\cdot)$ is $\mu_{\boldsymbol{\phi}}$-PL with $\mu_{\boldsymbol{\phi}}>0$, hence ${\boldsymbol{\phi}}_k^\star(\boldsymbol{\theta})$ is unique and Lipschitz in $\boldsymbol{\theta}$:
\[
\|{\boldsymbol{\phi}}_k^\star(\boldsymbol{\theta}')-{\boldsymbol{\phi}}_k^\star(\boldsymbol{\theta})\|\le C_\star\|\boldsymbol{\theta}'-\boldsymbol{\theta}\|,\qquad C_\star\le L_{\boldsymbol{\theta}{\boldsymbol{\phi}}}/\mu_{\boldsymbol{\phi}}.
\]
\end{assumption}

\begin{assumption}[Noise]\label{ass:noise}
There exist $\sigma_{\boldsymbol{\theta}}^2,\sigma_{\boldsymbol{\phi}}^2<\infty$ such that
$\mathbb{E}[\|{\mathbf{\zeta}}_k(t)\|^2\mid\mathcal{I}_t]\le\sigma_{\boldsymbol{\theta}}^2$ and
$\mathbb{E}[\|\upsilon_k(t)\|^2\mid\mathcal{I}_t]\le\sigma_{\boldsymbol{\phi}}^2$.
\end{assumption}

\begin{assumption}[Top-$\alpha$ Selection Bias/Variance]\label{ass:selection}
Let $\bar {\mathbf{g}}_\star(\boldsymbol{\theta}):=\nabla F(\boldsymbol{\theta})$.
There exist $\beta_{\rm sel}\ge0$ and  sampling variance due to partial device participation $\sigma_{\rm samp}^2<\infty$ such that
\begin{equation}
\begin{aligned}
\big\|\mathbb{E}[\bar {\mathbf{g}}_{S_t,\star}\mid\mathcal{I}_t]-\nabla F(\boldsymbol{\theta}(t))\big\|
&\le \beta_{\rm sel},\\[4pt]
\mathbb{E}\!\left[\big\|\bar {\mathbf{g}}_{S_t,\star}-\mathbb{E}[\bar {\mathbf{g}}_{S_t,\star}\mid\mathcal{I}_t]\big\|^2
\mid\mathcal{I}_t\right]
&\le \frac{\sigma_{\rm samp}^2}{|S_t|},
\end{aligned}
\end{equation}
where $|S_t|=\alpha K$
\end{assumption}

\subsubsection*{Bounding the Personalization-induced Bias}

\begin{lemma}[Inner Tracking Recursion]\label{lem:tracking}
Under Assumptions~\ref{ass:smooth}--\ref{ass:noise}, if $\gamma_t\le 1/L_{\boldsymbol{\phi}}$, then
\begin{equation}
\begin{aligned}
\mathbb{E}\big[\|\Delta_k(t+1)\|\mid\mathcal{I}_t\big]
\!\le& (1\!-\!\gamma_t\mu_{\boldsymbol{\phi}})\|\Delta_k(t)\|\\
\!+&C_\star\,\mathbb{E}\big[\|\boldsymbol{\theta}(t+1)-\boldsymbol{\theta}(t)\|\mid\mathcal{I}_t\big]\\
\!+ &\gamma_t\,\sigma_{\boldsymbol{\phi}}.
\end{aligned}
\label{eq:tracking}
\end{equation}
\end{lemma}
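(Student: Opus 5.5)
We split $\Delta_k(t+1)$ by inserting the old inner optimum:
\[
\Delta_k(t+1)=\big[{\boldsymbol{\phi}}_k(t+1)-{\boldsymbol{\phi}}_k^\star(\boldsymbol{\theta}(t))\big]+\big[{\boldsymbol{\phi}}_k^\star(\boldsymbol{\theta}(t))-{\boldsymbol{\phi}}_k^\star(\boldsymbol{\theta}(t+1))\big].
\]
By the triangle inequality and the Lipschitz property of ${\boldsymbol{\phi}}_k^\star$ in Assumption~\ref{ass:inner}, the second bracket is at most $C_\star\|\boldsymbol{\theta}(t+1)-\boldsymbol{\theta}(t)\|$. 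Taking $\mathbb{E}[\cdot\mid\mathcal{I}_t]$ yields the middle term of \eqref{eq:tracking}. What remains is the one-step contraction of the inner SGD step toward the \emph{fixed} target ${\boldsymbol{\phi}}_k^\star(\boldsymbol{\theta}(t))$.

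For the first bracket we use \eqref{eq:phi-update} and split the stochastic gradient as in \eqref{eq:device:parameter:update}:
\[
{\boldsymbol{\phi}}_k(t+1)-{\boldsymbol{\phi}}_k^\star(\boldsymbol{\theta}(t))=\Big(\Delta_k(t)-\gamma_t\nabla_{{\boldsymbol{\phi}}_k}F_k(\boldsymbol{\theta}(t),{\boldsymbol{\phi}}_k(t))\Big)-\gamma_t\upsilon_k(t).
\]
The triangle inequality again separates the noise term. Jensen's inequality together with Assumption~\ref{ass:noise} gives $\mathbb{E}[\|\upsilon_k(t)\|\mid\mathcal{I}_t]\le\sigma_{\boldsymbol{\phi}}$, which produces $\gamma_t\sigma_{\boldsymbol{\phi}}$. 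The deterministic part is $\mathcal{I}_t$-measurable. We need to show
\[
\big\|\Delta_k(t)-\gamma_t\nabla_{{\boldsymbol{\phi}}_k}F_k(\boldsymbol{\theta}(t),{\boldsymbol{\phi}}_k(t))\big\|\le(1-\gamma_t\mu_{\boldsymbol{\phi}})\|\Delta_k(t)\|
\]
whenever $\gamma_t\le1/L_{\boldsymbol{\phi}}$.

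This contraction is the main obstacle. Under strong convexity it is standard. Write $\nabla_{{\boldsymbol{\phi}}_k}F_k(\boldsymbol{\theta},{\boldsymbol{\phi}}_k^\star)=0$, so the gradient at ${\boldsymbol{\phi}}_k(t)$ equals $A\Delta_k(t)$ with $A=\int_0^1\nabla^2_{{\boldsymbol{\phi}}}F_k\,ds$ and $\mu_{\boldsymbol{\phi}}I\preceq A\preceq L_{\boldsymbol{\phi}}I$. Then $\|I-\gamma_tA\|\le1-\gamma_t\mu_{\boldsymbol{\phi}}$ for $\gamma_t\le1/L_{\boldsymbol{\phi}}$. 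Equivalently, co-coercivity gives the squared factor $1-\gamma_t\mu_{\boldsymbol{\phi}}(2-\gamma_tL_{\boldsymbol{\phi}})\le(1-\gamma_t\mu_{\boldsymbol{\phi}})^2$.

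With PL alone, distance contraction is not automatic. We would handle it as follows: since Assumption~\ref{ass:inner} asserts uniqueness of ${\boldsymbol{\phi}}_k^\star(\boldsymbol{\theta})$, PL with a unique minimizer yields the quadratic-growth and error-bound inequalities $\|\nabla_{{\boldsymbol{\phi}}}F_k\|\ge\mu_{\boldsymbol{\phi}}\|\Delta_k\|$. Combining these with $L_{\boldsymbol{\phi}}$-smoothness gives a restricted-secant-type inequality $\langle\nabla_{{\boldsymbol{\phi}}}F_k,\Delta_k\rangle\ge\mu_{\boldsymbol{\phi}}\|\Delta_k\|^2$, which is standard in the PL tracking literature. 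Expanding the square then gives
\[
\|\Delta_k-\gamma_t\nabla\|^2\le(1-2\gamma_t\mu_{\boldsymbol{\phi}}+\gamma_t^2L_{\boldsymbol{\phi}}^2)\|\Delta_k\|^2 .
\]
This would be invoked as a consequence of Assumption~\ref{ass:inner}, possibly with $\mu_{\boldsymbol{\phi}}$ rescaled by a constant. If the rescaling matters, we would note that $\mu_{\boldsymbol{\phi}}$ in the statement is understood as the resulting contraction modulus.

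Summing the three pieces gives \eqref{eq:tracking}.
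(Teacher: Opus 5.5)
\textbf{There is a genuine gap in the contraction step.} Your decomposition has three pieces: a drift term bounded by the Lipschitz continuity of $\boldsymbol{\phi}_k^\star$, a noise term bounded by Jensen, and a deterministic gradient step toward the frozen target $\boldsymbol{\phi}_k^\star(\boldsymbol{\theta}(t))$. This is exactly the paper's decomposition, and your first two bounds are fine. The paper then simply asserts that the deterministic step contracts by $1-\gamma_t\mu_{\boldsymbol{\phi}}$ ``by PL and $\gamma_t\le 1/L_{\boldsymbol{\phi}}$''. You correctly flag this as the crux, but your PL fallback does not close it.

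Write $\mu=\mu_{\boldsymbol{\phi}}$ and $L=L_{\boldsymbol{\phi}}$. PL with a unique minimizer does give quadratic growth and the error bound $\|\nabla f\|\ge\mu\|\Delta\|$. In dimension $\ge 2$, however, it does not give the restricted secant inequality. Smoothness does not help, because the gradient can have a large component orthogonal to $\Delta$.

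Here is a concrete counterexample. On $\mathbb{R}^2$ with polar coordinates $(r,\omega)$, set $f=\tfrac12 r^2 q(\psi)$ with $\psi=\omega-4\log r$ and $q(\psi)=2+\sin\psi$.
\begin{itemize}
\item $\nabla f$ equals $r$ times a bounded function of $(\omega,\psi)$ with bounded derivatives, so it is globally Lipschitz.
\item $0$ is the unique minimizer.
\item $\|\nabla f\|^2=r^2\big[(q-2q')^2+\tfrac14 q'^2\big]\ge c\,r^2$ for some $c>0$, while $f\le\tfrac32 r^2$, so PL holds.
\item Yet $\langle\nabla f(x),x\rangle=r^2(q-2q')$ attains the negative value $r^2(2-\sqrt{5})$.
\end{itemize}
At such points $\|x-\gamma\nabla f(x)\|>\|x\|$ for every $\gamma>0$. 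Now take $F_k(\boldsymbol{\theta},\boldsymbol{\phi})=f(\boldsymbol{\phi})$, so that $C_\star=0$, with zero noise. This violates the lemma, so no argument from Assumptions 1--3 alone can succeed, with or without rescaling $\mu$.

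Even granting a restricted secant inequality, your expansion gives the factor $1-2\gamma\mu+\gamma^2L^2$. This exceeds $(1-\gamma\mu)^2$, and at $\gamma=1/L$ it is $\ge 1$ whenever $\mu\le L/2$. Contraction would need the stronger condition $\gamma\le\mu/L^2$, which gives the factor $\sqrt{1-\gamma\mu}\le 1-\gamma\mu/2$. So reinterpreting $\mu_{\boldsymbol{\phi}}$ cannot rescue the hypothesis $\gamma_t\le 1/L_{\boldsymbol{\phi}}$.

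What actually proves the lemma as stated is your first route. That route requires adding $\mu$-strong convexity of $F_k(\boldsymbol{\theta},\cdot)$ as an assumption, which the paper also silently needs. Your Hessian-averaging argument there is correct. Your co-coercivity remark, however, has the inequality backwards: $1-\gamma\mu(2-\gamma L)=(1-\gamma\mu)^2+\gamma^2\mu(L-\mu)\ge(1-\gamma\mu)^2$. Use instead $\langle\nabla f(x),x-x^\star\rangle\ge\frac{\mu L}{\mu+L}\|x-x^\star\|^2+\frac{1}{\mu+L}\|\nabla f(x)\|^2$, which yields the squared factor $(1-\gamma\mu)^2$ for all $\gamma\le 2/(\mu+L)$, including $\gamma\le 1/L$.
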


\begin{proof}
From the update \eqref{eq:phi-update}, adding and subtracting ${\boldsymbol{\phi}}_k^\star(\boldsymbol{\theta}(t))$ gives
\begin{align*}
    \Delta_k(t+1)
= &\left[{\boldsymbol{\phi}}_k(t)-\gamma_t\nabla_{{\boldsymbol{\phi}}_k}F_k(\boldsymbol{\theta}(t),{\boldsymbol{\phi}}_k(t))-{\boldsymbol{\phi}}_k^\star(\boldsymbol{\theta}(t))\right] \\
&+ \left[{\boldsymbol{\phi}}_k^\star(\boldsymbol{\theta}(t))-{\boldsymbol{\phi}}_k^\star(\boldsymbol{\theta}(t+1))\right]
- \gamma_t\upsilon_k(t).
\end{align*}
By $\mu_{\boldsymbol{\phi}}$-PL and $\gamma_t\le 1/L_{\boldsymbol{\phi}}$, the first term contracts as
\begin{align*}
\mathbb{E}\left[\|{\boldsymbol{\phi}}_k(t)-\gamma_t\nabla_{{\boldsymbol{\phi}}_k}F_k(\boldsymbol{\theta}(t),{\boldsymbol{\phi}}_k(t))-{\boldsymbol{\phi}}_k^\star(\boldsymbol{\theta}(t))\|\mid\mathcal{I}_t\right] \\
\le (1-\gamma_t\mu_{\boldsymbol{\phi}})\|\Delta_k(t)\|.
\end{align*}
The noise term satisfies $\mathbb{E}[\|\upsilon_k(t)\|\mid\mathcal{I}_t]\le\sigma_{\boldsymbol{\phi}}$.  
By Assumption~\ref{ass:inner},
\[
\|{\boldsymbol{\phi}}_k^\star(\boldsymbol{\theta}(t))-{\boldsymbol{\phi}}_k^\star(\boldsymbol{\theta}(t+1))\|
\le C_\star\|\boldsymbol{\theta}(t+1)-\boldsymbol{\theta}(t)\|.
\]
Combining the above yields \eqref{eq:tracking}.
\end{proof}

\begin{proposition}[Bias from Tracking Error]\label{prop:ebound}
Under Assumptions~\ref{ass:smooth} and \ref{ass:inner},
\begin{align*}
&\|\mathbf{e}_k(t)\|\le L_{\boldsymbol{\theta}{\boldsymbol{\phi}}}\|\Delta_k(t)\|,\\
&\mathbb{E}\|\bar e_t\|^2
\le L_{\boldsymbol{\theta}{\boldsymbol{\phi}}}^2\cdot \frac{1}{|S_t|}\sum_{k\in S_t}\mathbb{E}\|\Delta_k(t)\|^2.
\label{eq:ebound}
\end{align*}
\end{proposition}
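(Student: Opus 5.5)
The first inequality follows directly from the cross-Lipschitz condition in Assumption~\ref{ass:smooth}. Fix $k$ and $t$, and apply that condition at the common base point $\boldsymbol{\theta}=\boldsymbol{\theta}(t)$ with ${\boldsymbol{\phi}}={\boldsymbol{\phi}}_k(t)$ and ${\boldsymbol{\phi}}'={\boldsymbol{\phi}}_k^\star(\boldsymbol{\theta}(t))$. The latter point is well defined and unique by Assumption~\ref{ass:inner}. Using the definition of $\mathbf{e}_k(t)$ in \eqref{eq:grad-decomp}, this gives
\[
\|\mathbf{e}_k(t)\|\le L_{\boldsymbol{\theta}{\boldsymbol{\phi}}}\,\|{\boldsymbol{\phi}}_k(t)-{\boldsymbol{\phi}}_k^\star(\boldsymbol{\theta}(t))\| = L_{\boldsymbol{\theta}{\boldsymbol{\phi}}}\|\Delta_k(t)\|.
\]
This holds pathwise, so it holds for every realization of $S_t$.

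For the averaged bound, I would write $\bar{\mathbf{e}}_t=\sum_{k\in S_t} w_k\mathbf{e}_k(t)$ with convex weights $w_k=|D_k|/\sum_{j\in S_t}|D_j|$. By Jensen's inequality (convexity of $\|\cdot\|^2$), $\|\bar{\mathbf{e}}_t\|^2\le\sum_{k\in S_t}w_k\|\mathbf{e}_k(t)\|^2$. Inserting the pointwise bound gives $\|\bar{\mathbf{e}}_t\|^2\le L_{\boldsymbol{\theta}{\boldsymbol{\phi}}}^2\sum_{k\in S_t}w_k\|\Delta_k(t)\|^2$, and taking expectations finishes.

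The one delicate point is turning the $|D_k|$-weights into the uniform weight $1/|S_t|$ that appears in the statement. In the regime the paper actually uses, datasets are of equal size (as in the simulation setup), so $w_k=1/|S_t|$ and the bound follows exactly. In general, I would either state this equal-size assumption explicitly or observe that $w_k\le \max_k|D_k|/(|S_t|\min_k|D_k|)$. The latter only inflates the constant by the ratio $\max_k|D_k|/\min_k|D_k|$, which can be absorbed into $L_{\boldsymbol{\theta}{\boldsymbol{\phi}}}^2$.

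A second subtlety is that $S_t$ is random and depends on the stochastic gradients, so the sum over $S_t$ sits inside the expectation. I would handle this by reading the right-hand side as $\mathbb{E}\big[\frac{1}{|S_t|}\sum_{k\in S_t}\|\Delta_k(t)\|^2\big]$. Since $|S_t|=\lceil\alpha K\rceil$ is deterministic, this is the natural interpretation of the statement. If one prefers a selection-free bound, one can further use $\sum_{k\in S_t}\|\Delta_k\|^2\le\sum_{k=1}^K\|\Delta_k\|^2$, which is then controlled by Lemma~\ref{lem:tracking}.
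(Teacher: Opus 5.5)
Your proposal is correct and follows the same route as the paper's proof. You apply the cross-Lipschitz condition at the common point $\boldsymbol{\theta}(t)$ to get the pathwise bound on $\|\mathbf{e}_k(t)\|$, then apply convexity of $\|\cdot\|^2$ to the averaged error to get the second bound. Your caveats are justified and go beyond the paper's one-line argument, which treats the $|D_k|$-weights as uniform without saying so. With unequal $|D_k|$ the bound with constant exactly $L_{\boldsymbol{\theta}{\boldsymbol{\phi}}}^2$ can fail, so you do need either the equal-size assumption or your extra factor $\max_k|D_k|/\min_k|D_k|$. You also need to read the right-hand side with the sum over the random set $S_t$ inside the expectation.
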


\begin{proof}
By cross-Lipschitz in Assumption~\ref{ass:smooth},
$\|\mathbf{e}_k(t)\|=\|\nabla_{\boldsymbol{\theta}} F_k(\boldsymbol{\theta}(t),{\boldsymbol{\phi}}_k(t))-\nabla_{\boldsymbol{\theta}} F_k(\boldsymbol{\theta}(t),{\boldsymbol{\phi}}_k^\star(\boldsymbol{\theta}(t)))\|\le L_{\boldsymbol{\theta}{\boldsymbol{\phi}}}\|\Delta_k(t)\|$.
The bound on $\mathbb{E}\|\bar e_t\|^2$ follows from convexity of $\|\cdot\|^2$ and the definition of $\bar e_t$.
\end{proof}

\subsubsection*{One-Step Descent}

\begin{lemma}[Descent with Explicit Bias]\label{lem:descent}
Under Assumptions~\ref{ass:smooth}--\ref{ass:selection}, with $\boldsymbol{\theta}(t+1)=\boldsymbol{\theta}(t)-\eta_t(\bar {\mathbf{g}}_{S_t,\star}+\bar {\mathbf{e}}_t+\bar{\mathbf{\zeta}}_t)$,
\begin{equation}
\begin{aligned}
\mathbb{E}\!\left[F(\boldsymbol{\theta}(t+1))\mid\mathcal{I}_t\right]
&\le F(\boldsymbol{\theta}(t))-\eta_t\|\nabla F(\boldsymbol{\theta}(t))\|^2\\
&+\eta_t\|\nabla F(\boldsymbol{\theta}(t))\|\big(\beta_{\rm sel}+\|\mathbb{E}[\bar e_t\mid\mathcal{I}_t]\|\big)\\
&+\frac{L_{\boldsymbol{\theta}}\eta_t^2}{2}\left(\mathbb{E}\|\bar {\mathbf{g}}_{S_t,\star}\|^2+\mathbb{E}\|\bar {\mathbf{e}}_t\|^2+\mathbb{E}\|\bar{\mathbf{\zeta}}_t\|^2\right).
\label{eq:descent}
\end{aligned}
\end{equation}
Moreover, $\mathbb{E}\|\bar{\mathbf{\zeta}}_t\|^2\le \sigma_{\boldsymbol{\theta}}^2/\big(w\,|S_t|\big)$ for some $w>0$ depending on the weights $|D_k|$.
\end{lemma}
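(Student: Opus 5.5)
Start from the $L_{\boldsymbol{\theta}}$-smoothness of the outer objective $F$ in Assumption~\ref{ass:smooth}. This gives the quadratic upper bound
\[
F(\boldsymbol{\theta}(t+1))\le F(\boldsymbol{\theta}(t))+\langle\nabla F(\boldsymbol{\theta}(t)),\boldsymbol{\theta}(t+1)-\boldsymbol{\theta}(t)\rangle+\tfrac{L_{\boldsymbol{\theta}}}{2}\|\boldsymbol{\theta}(t+1)-\boldsymbol{\theta}(t)\|^2 .
\]
Insert the update $\boldsymbol{\theta}(t+1)-\boldsymbol{\theta}(t)=-\eta_t(\bar{\mathbf{g}}_{S_t,\star}+\bar{\mathbf{e}}_t+\bar{\mathbf{\zeta}}_t)$, which is legitimate by the decomposition \eqref{eq:grad-decomp} and the averages \eqref{eq:averages}, and take $\mathbb{E}[\cdot\mid\mathcal{I}_t]$. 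Since $\boldsymbol{\theta}(t)$ is $\mathcal{I}_t$-measurable, the inner-product term becomes $-\eta_t\langle\nabla F(\boldsymbol{\theta}(t)),\mathbb{E}[\bar{\mathbf{g}}_{S_t,\star}\mid\mathcal{I}_t]+\mathbb{E}[\bar{\mathbf{e}}_t\mid\mathcal{I}_t]+\mathbb{E}[\bar{\mathbf{\zeta}}_t\mid\mathcal{I}_t]\rangle$.

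Treat each piece of the linear term in turn.
\begin{itemize}
\item \emph{Noise.} Assume, as is implicit in the top-$\alpha$ rule, that the selection and weights are treated as $\mathcal{I}_t$-measurable (or that the noise is conditionally zero-mean given the selection). Then $\mathbb{E}[\bar{\mathbf{\zeta}}_t\mid\mathcal{I}_t]=0$.
\item \emph{Selection.} Write $\mathbb{E}[\bar{\mathbf{g}}_{S_t,\star}\mid\mathcal{I}_t]=\nabla F(\boldsymbol{\theta}(t))+\mathbf{b}_t$ with $\|\mathbf{b}_t\|\le\beta_{\rm sel}$ by Assumption~\ref{ass:selection}. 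This yields $-\eta_t\|\nabla F\|^2-\eta_t\langle\nabla F,\mathbf{b}_t\rangle$, and Cauchy--Schwarz bounds the second part by $\eta_t\|\nabla F\|\beta_{\rm sel}$.
\item \emph{Tracking error.} The $\bar{\mathbf{e}}_t$ contribution is bounded by Cauchy--Schwarz by $\eta_t\|\nabla F\|\,\|\mathbb{E}[\bar e_t\mid\mathcal{I}_t]\|$.
\end{itemize}
Together these give the first two lines of \eqref{eq:descent}.

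For the quadratic term, expand $\|\bar{\mathbf{g}}_{S_t,\star}+\bar{\mathbf{e}}_t+\bar{\mathbf{\zeta}}_t\|^2$. This is the step I expect to be the main obstacle, because the stated bound carries coefficient $1$ on each square rather than the factor $3$ from the crude inequality $\|a+b+c\|^2\le3(\|a\|^2+\|b\|^2+\|c\|^2)$. The cross terms with $\bar{\mathbf{\zeta}}_t$ vanish in conditional expectation, since $\bar{\mathbf{g}}_{S_t,\star}$ and $\bar{\mathbf{e}}_t$ are determined by $\mathcal{I}_t$ and the selection. The cross term $2\langle\bar{\mathbf{g}}_{S_t,\star},\bar{\mathbf{e}}_t\rangle$ remains. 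I would first try to absorb it into constants by adjusting $L_{\boldsymbol{\theta}}$ (equivalently, use $2\|a+b\|^2$-type bounds and redefine the constant). If exact constants are needed, I would instead bound it by $\|\bar{\mathbf{g}}_{S_t,\star}\|^2+\|\bar{\mathbf{e}}_t\|^2$ and note that the inequality holds up to a factor of $2$ in the last bracket, which is harmless for the $\mathcal{O}(\cdot)$ statement of the theorem. Taking total expectation of the squared norms then gives the third line.

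For the final claim, write $\bar{\mathbf{\zeta}}_t=\sum_{k\in S_t}p_k\mathbf{\zeta}_k(t)$ with $p_k=|D_k|/\sum_{j\in S_t}|D_j|$. Assume the device noises are conditionally independent (uncorrelated) given $\mathcal{I}_t$. Then
\[
\mathbb{E}\|\bar{\mathbf{\zeta}}_t\|^2=\sum_{k\in S_t}p_k^2\,\mathbb{E}\|\mathbf{\zeta}_k(t)\|^2\le\sigma_{\boldsymbol{\theta}}^2\sum_{k\in S_t}p_k^2 ,
\]
using Assumption~\ref{ass:noise}. Set
\[
w:=\frac{1}{|S_t|\sum_{k\in S_t}p_k^2}=\frac{\big(\sum_{k\in S_t}|D_k|\big)^2}{|S_t|\sum_{k\in S_t}|D_k|^2}\in(0,1],
\]
which is positive and by Cauchy--Schwarz equals $1$ exactly for equal dataset sizes. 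With this choice the bound reads $\sigma_{\boldsymbol{\theta}}^2/(w|S_t|)$. To get a $w$ uniform in $t$, take $w:=\min_k|D_k|^2/\max_k|D_k|^2$, which lower-bounds the above for every $S_t$.
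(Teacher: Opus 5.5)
Your route is the paper's route: the $L_{\boldsymbol{\theta}}$-smoothness upper bound, substitution of $-\eta_t(\bar{\mathbf{g}}_{S_t,\star}+\bar{\mathbf{e}}_t+\bar{\mathbf{\zeta}}_t)$, conditioning on $\mathcal{I}_t$, the split $\mathbb{E}[\bar{\mathbf{g}}_{S_t,\star}\mid\mathcal{I}_t]=\nabla F(\boldsymbol{\theta}(t))+b_t$ with Cauchy--Schwarz on $b_t$ and on $\mathbb{E}[\bar{\mathbf{e}}_t\mid\mathcal{I}_t]$, zero mean for the noise, and a weighted-average variance bound. Your explicit $w$ is a concrete form of the paper's ``weights bounded away from $0$ and $1$''.

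On the quadratic term you are more careful than the paper, and your worry is justified. The cross term $2\langle\bar{\mathbf{g}}_{S_t,\star},\bar{\mathbf{e}}_t\rangle$ does not vanish, the paper's sketch simply asserts the coefficient $L_{\boldsymbol{\theta}}\eta_t^2/2$, and with that coefficient the inequality is false in general. Take a one-dimensional instance with $\nabla F(\boldsymbol{\theta}(t))=G>0$, $\bar{\mathbf{g}}_{S_t,\star}=\bar{\mathbf{e}}_t=-1$ deterministically, $\beta_{\rm sel}=G+1$, no noise, and $F$ quadratic with curvature $c\le L_{\boldsymbol{\theta}}$. The first-order terms on the two sides coincide, and the left side exceeds the right by $\eta_t^2(2c-L_{\boldsymbol{\theta}})$, which is positive once $c>L_{\boldsymbol{\theta}}/2$. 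This instance is realizable under Assumptions~\ref{ass:smooth}--\ref{ass:selection}, e.g.\ with $F_k(\theta,\phi)=h_k(\theta)+\tfrac{\mu}{2}\phi^2+\lambda\theta\phi$, $h_k''\equiv L_{\boldsymbol{\theta}}$ and small $\lambda$. So the version you actually prove, with $L_{\boldsymbol{\theta}}\eta_t^2$ in front of the $\bar{\mathbf{g}}_{S_t,\star}$ and $\bar{\mathbf{e}}_t$ terms, is the correct statement, and it is all that Theorems~\ref{thm:nonconvex} and~\ref{thm:pl} need. Keep those squares as conditional expectations: unconditional ones on the right of a conditional inequality only make sense after taking total expectation of the whole line.

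The claim in your write-up that is wrong is the justification of the noise step. Conditional zero mean of $\bar{\mathbf{\zeta}}_t$ is not ``implicit in the top-$\alpha$ rule''; that rule is precisely what destroys it. $S_t$ is chosen by ranking $\|\mathbf{g}_k(t,\boldsymbol{\theta})\|$, which contains $\mathbf{\zeta}_k(t)$. Hence $S_t$ is not $\mathcal{I}_t$-measurable, and it favors devices whose noise points along their noiseless gradient. For example, take two devices whose noiseless base gradients both equal the scalar $1$ and whose noises are independent fair $\pm1$; top-$1$ selection then gives $\mathbb{E}[\bar{\mathbf{\zeta}}_t\mid\mathcal{I}_t]=1/2$. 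The same correlation invalidates your vanishing of the noise cross terms in the square, and the independence you use in the variance computation.

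The paper's proof makes the identical tacit assumption (``independence/zero-mean of $\bar{\mathbf{\zeta}}_t$''), so you are not missing an argument it supplies. You should, however, state it as an explicit extra hypothesis (e.g.\ the ranking is computed on an independent mini-batch). Alternatively, carry $\|\mathbb{E}[\bar{\mathbf{\zeta}}_t\mid\mathcal{I}_t]\|$ as an additional bias next to $\beta_{\rm sel}$ and use $\|a+b+c\|^2\le3(\|a\|^2+\|b\|^2+\|c\|^2)$ on the square.

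For the ``moreover'' claim alone, no independence is needed. Jensen gives $\|\bar{\mathbf{\zeta}}_t\|^2\le\sum_{k\in S_t}p_k\|\mathbf{\zeta}_k(t)\|^2$, and $p_k\le\max_j|D_j|/(|S_t|\min_j|D_j|)$. Hence $\mathbb{E}\|\bar{\mathbf{\zeta}}_t\|^2\le\frac{\max_j|D_j|}{\min_j|D_j|}\cdot\frac{K\sigma_{\boldsymbol{\theta}}^2}{|S_t|}$, which has the stated form with $w=\min_j|D_j|/(K\max_j|D_j|)$.
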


\begin{proof}
$L_{\boldsymbol{\theta}}$-smoothness gives
$F(\boldsymbol{\theta}(t+1))\le F(\boldsymbol{\theta}(t))+\langle\nabla F(\boldsymbol{\theta}(t)),\boldsymbol{\theta}(t+1)-\boldsymbol{\theta}(t)\rangle+\frac{L_{\boldsymbol{\theta}}}{2}\|\boldsymbol{\theta}(t+1)-\boldsymbol{\theta}(t)\|^2$.
Substitute $\boldsymbol{\theta}(t+1)-\boldsymbol{\theta}(t)=-\eta_t(\bar{ \mathbf{g}}_{S_t,\star}+\bar {\mathbf{e}}_t+\bar{\mathbf{\zeta}}_t)$ and take conditional expectation. Decompose $\mathbb{E}[\bar {\mathbf{g}}_{S_t,\star}\mid\mathcal{I}_t]=\nabla F(\boldsymbol{\theta}(t))+b_t$ with selection bias $\|b_t\|\le\beta_{\rm sel}$ (Assumption~\ref{ass:selection}). Apply Cauchy–Schwarz and use independence/zero-mean of $\bar{\mathbf{\zeta}}_t$ to reach \eqref{eq:descent}. The variance bound follows from averaging over $|S_t|$ with weights bounded away from $0$ and $1$.
\end{proof}

\subsubsection*{Main Convergence Results}

Define constants
\[
C_1:=L_{\boldsymbol{\theta}},\quad
V_{\boldsymbol{\theta}}:=\sigma_{\boldsymbol{\theta}}^2/|S_t|+\sigma_{\rm samp}^2+\beta_{\rm sel}^2.
\]
Let $\bar G:=\sup_t \mathbb{E}\|\bar {\mathbf{g}}_{S_t,\star}\|$. In steady regime, combining Lemma~\ref{lem:tracking} and Proposition~\ref{prop:ebound} under constant stepsizes $\eta_t\equiv\eta\le 1/L_{\boldsymbol{\theta}}$, $\gamma_t\equiv\gamma\le 1/L_{\boldsymbol{\phi}}$ yields
\begin{equation}
\|\mathbb{E}[\bar e_t]\|\le 
L_{\boldsymbol{\theta}{\boldsymbol{\phi}}}\Big(\tfrac{C_\star\eta}{\gamma \mu_{\boldsymbol{\phi}}}\,\bar G+\tfrac{\sigma_{\boldsymbol{\phi}}}{\mu_{\boldsymbol{\phi}}}\Big)
=: \bar E.
\label{eq:Ebar}
\end{equation}

\begin{theorem}[Nonconvex Rate with Explicit Bias]\label{thm:nonconvex}
Under Assumptions~\ref{ass:smooth}--\ref{ass:selection} with $\eta\le 1/(2L_{\boldsymbol{\theta}})$ and $\gamma\le 1/L_{\boldsymbol{\phi}}$,
\begin{equation}
\begin{aligned}
\frac{1}{T}\sum_{t=0}^{T-1}\mathbb{E}\|\nabla F(\boldsymbol{\theta}(t))\|^2
&\le \frac{2\big(F(\boldsymbol{\theta}(0))-F^\star\big)}{\eta T}
+ w_1 C_1\,\eta\,V_{\boldsymbol{\theta}} \\
&\quad + w_2\big(\beta_{\rm sel}+\bar E\big)^2,
\end{aligned}
\label{eq:rate-nonconvex}
\end{equation}

for absolute constants $w_1,w_2>0$, and $\bar E$ given in \eqref{eq:Ebar}.
\end{theorem}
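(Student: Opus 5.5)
The plan is to start from the one-step descent inequality of Lemma~\ref{lem:descent}, specialize it to constant stepsizes $\eta_t\equiv\eta$, $\gamma_t\equiv\gamma$, and absorb the cross term with Young's inequality. Writing $a_t:=\|\nabla F(\boldsymbol{\theta}(t))\|$ and $b:=\beta_{\rm sel}+\|\mathbb{E}[\bar{\mathbf{e}}_t\mid\mathcal{I}_t]\|$, I bound $\eta a_t b\le \tfrac{\eta}{4}a_t^2+\eta b^2$. By \eqref{eq:Ebar}, $\|\mathbb{E}[\bar{\mathbf{e}}_t\mid\mathcal{I}_t]\|\le\bar E$ in the steady regime, so this cross term contributes at most $\eta(\beta_{\rm sel}+\bar E)^2$ plus a quarter of the descent term.

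Next I treat the second-order term $\frac{L_{\boldsymbol{\theta}}\eta^2}{2}\big(\mathbb{E}\|\bar{\mathbf{g}}_{S_t,\star}\|^2+\mathbb{E}\|\bar{\mathbf{e}}_t\|^2+\mathbb{E}\|\bar{\boldsymbol{\zeta}}_t\|^2\big)$.

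\textbf{Gradient part.} Decompose $\bar{\mathbf{g}}_{S_t,\star}=\nabla F(\boldsymbol{\theta}(t))+b_t+(\bar{\mathbf{g}}_{S_t,\star}-\mathbb{E}[\bar{\mathbf{g}}_{S_t,\star}\mid\mathcal{I}_t])$. This gives $\mathbb{E}\|\bar{\mathbf{g}}_{S_t,\star}\|^2\le 3a_t^2+3\beta_{\rm sel}^2+3\sigma_{\rm samp}^2/|S_t|$ by Assumption~\ref{ass:selection}.

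\textbf{Noise part.} Lemma~\ref{lem:descent} gives $\mathbb{E}\|\bar{\boldsymbol{\zeta}}_t\|^2\le\sigma_{\boldsymbol{\theta}}^2/(w|S_t|)$.

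\textbf{Tracking part.} I use Proposition~\ref{prop:ebound} together with a second-moment version of the tracking recursion of Lemma~\ref{lem:tracking}. In steady state this gives $\mathbb{E}\|\bar{\mathbf{e}}_t\|^2\lesssim \bar E^2$, which I fold into the bias term, since $\eta L_{\boldsymbol{\theta}}\le 1/2$ makes $\eta^2 L_{\boldsymbol{\theta}}\bar E^2\le \eta\bar E^2$.

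With $\eta\le 1/(2L_{\boldsymbol{\theta}})$, the term $\tfrac{3L_{\boldsymbol{\theta}}\eta^2}{2}a_t^2$ is at most $\tfrac{3\eta}{4}a_t^2$. This is too much together with the $\tfrac{\eta}{4}$ already used, so I either split Young's inequality with a smaller constant (for example $\tfrac{\eta}{8}$) or use the tighter decomposition $\mathbb{E}\|\bar{\mathbf{g}}\|^2=\|\mathbb{E}\bar{\mathbf{g}}\|^2+\mathrm{Var}$ with $\|\mathbb{E}\bar{\mathbf{g}}\|^2\le 2a_t^2+2\beta_{\rm sel}^2$. Either way a net $-c\,\eta a_t^2$ survives for an absolute $c>0$ (e.g.\ $c=1/2$ after adjusting constants), and the remaining terms are $\eta^2 C_1\,\mathcal{O}(V_{\boldsymbol{\theta}})+\eta\,\mathcal{O}((\beta_{\rm sel}+\bar E)^2)$. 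One notational mismatch needs care: Assumption~\ref{ass:selection} yields $\sigma_{\rm samp}^2/|S_t|$, whereas $V_{\boldsymbol{\theta}}$ as defined contains $\sigma_{\rm samp}^2$. Since $\sigma_{\rm samp}^2/|S_t|\le\sigma_{\rm samp}^2$ and $1/w$ can be absorbed into $w_1$, the stated $V_{\boldsymbol{\theta}}$ is still an upper bound.

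Finally, I take total expectation, sum over $t=0,\dots,T-1$, telescope using $F(\boldsymbol{\theta}(T))\ge F^\star$, and divide by $c\,\eta T$. This gives the three terms of \eqref{eq:rate-nonconvex}: $2(F(\boldsymbol{\theta}(0))-F^\star)/(\eta T)$ when $c=1/2$, then $w_1C_1\eta V_{\boldsymbol{\theta}}$, and $w_2(\beta_{\rm sel}+\bar E)^2$.

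The main obstacle is justifying the uniform-in-$t$ bounds on $\|\mathbb{E}[\bar{\mathbf{e}}_t]\|$ and $\mathbb{E}\|\bar{\mathbf{e}}_t\|^2$. Unrolling Lemma~\ref{lem:tracking} gives $\mathbb{E}\|\Delta_k(t)\|\le(1-\gamma\mu_{\boldsymbol{\phi}})^t\|\Delta_k(0)\|+\frac{C_\star\eta\bar G+\gamma\sigma_{\boldsymbol{\phi}}}{\gamma\mu_{\boldsymbol{\phi}}}$, using $\mathbb{E}\|\boldsymbol{\theta}(t+1)-\boldsymbol{\theta}(t)\|\le\eta\bar G$ up to noise and tracking contributions. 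The latter would need to be absorbed, which creates a mild circularity that I would handle by assuming $C_\star L_{\boldsymbol{\theta}\boldsymbol{\phi}}\eta/(\gamma\mu_{\boldsymbol{\phi}})<1/2$. The transient term I either assume away (steady regime, as the paper does) or sum geometrically into the $\mathcal{O}(1/(\eta T))$ term. For the second moment I repeat the argument with squared norms, at the cost of constants.
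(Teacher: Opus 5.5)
Your proposal follows the paper's route: sum Lemma~\ref{lem:descent}, apply Young to the cross term, bound the quadratic terms, telescope, and substitute \eqref{eq:Ebar}. Your bookkeeping is tighter than the paper's, which bounds $\mathbb{E}\|\bar{\mathbf{g}}_{S_t,\star}\|^2$ by $V_{\boldsymbol{\theta}}$ even though it contains $\|\nabla F(\boldsymbol{\theta}(t))\|^2$, never explicitly bounds $\mathbb{E}\|\bar{\mathbf{e}}_t\|^2$, and (as you also do in your first paragraph) applies the unconditional bound \eqref{eq:Ebar} to the conditional quantity $\|\mathbb{E}[\bar{\mathbf{e}}_t\mid\mathcal{I}_t]\|$; your second-moment bound on $\bar{\mathbf{e}}_t$ justifies that last step via Jensen after Young. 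One small fix: your suggested constants ($\tfrac{\eta}{8}$ in Young together with $\|\mathbb{E}\bar{\mathbf{g}}\|^2\le 2a_t^2+2\beta_{\rm sel}^2$) give only $c=3/8$, so to get the stated factor $2$ under $\eta\le 1/(2L_{\boldsymbol{\theta}})$, use $\|\nabla F(\boldsymbol{\theta}(t))+b_t\|^2\le\tfrac32 a_t^2+3\beta_{\rm sel}^2$ and $\eta a_t b\le\tfrac{\eta}{8}a_t^2+2\eta b^2$, which leaves exactly $-\tfrac{\eta}{2}a_t^2$.
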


\begin{proof}
Sum \eqref{eq:descent} over $t=0,\dots,T-1$ and telescope. Rearranging gives
\begin{equation}
\begin{split}
\sum_{t=0}^{T-1}\eta\,\mathbb{E}\|\nabla F(\boldsymbol{\theta}(t))\|^2
\le& F(\boldsymbol{\theta}(0))\!-\!F^\star\\
\!+ &\eta\sum_{t=0}^{T-1}\mathbb{E}\|\nabla F(\boldsymbol{\theta}(t))\|
   \big(\beta_{\rm sel}+\|\mathbb{E}[\bar e_t]\|\big) \\
\!\!+ &\frac{L_{\boldsymbol{\theta}}\eta^2}{2}\sum_{t=0}^{T-1}
   \Big(\mathbb{E}\|\bar {\mathbf{g}}_{S_t,\star}\|^2
       + \mathbb{E}\|\bar e_t\|^2
       + \mathbb{E}\|\bar{\mathbf{\zeta}}_t\|^2\Big).
\end{split}
\end{equation}

Use Young’s inequality $ab\le \tfrac12 a^2+\tfrac12 b^2$ on the middle term with $a=\|\nabla F(\boldsymbol{\theta}(t))\|$ and $b=(\beta_{\rm sel}+\|\mathbb{E}[\bar e_t]\|)$, then divide by $T\eta$ and absorb constants into $w_1,w_2$. Bound $\mathbb{E}\|\bar{\mathbf{\zeta}}_t\|^2$ and $\mathbb{E}\|\bar {\mathbf{g}}_{S_t,\star}\|^2$ by $V_{\boldsymbol{\theta}}$ (up to constants). Finally substitute the bound \eqref{eq:Ebar} to obtain \eqref{eq:rate-nonconvex}.
\end{proof}

Theorem~\ref{thm:nonconvex} holds for general smooth nonconvex objectives and serves as our main theoretical guarantee. In the following, we further analyze a special case under the PL condition, which is strictly weaker than strong convexity and does not require convexity, to characterize faster convergence.

\begin{theorem}[PL Outer: Linear Convergence to a Neighborhood]\label{thm:pl}
If $F$ satisfies the PL condition\cite{karimi2016linear} with parameter $\mu_{\boldsymbol{\theta}}>0$, i.e.,
$\tfrac12\|\nabla F(\boldsymbol{\theta})\|^2\ge \mu_{\boldsymbol{\theta}}\big(F(\boldsymbol{\theta})-F^\star\big)$,
then for $\eta\le \min\{1/(2L_{\boldsymbol{\theta}}),\,\mu_{\boldsymbol{\theta}}/L_{\boldsymbol{\theta}}^2\}$,
\begin{equation}
\begin{aligned}
\mathbb{E}\big[F(\boldsymbol{\theta}(t))-F^\star\big]
&\le (1-\eta\mu_{\boldsymbol{\theta}})(t)\big(F(\boldsymbol{\theta}(0))-F^\star\big)\\
&+ \mathcal{O}\!\left(
\frac{L_{\boldsymbol{\theta}}}{\mu_{\boldsymbol{\theta}}}\eta\,V_{\boldsymbol{\theta}}
+ \frac{(\beta_{\rm sel}+\bar E)^2}{\mu_{\boldsymbol{\theta}}}
\right),
\label{eq:pl-rate}
\end{aligned}
\end{equation}
with $\bar E$ as in \eqref{eq:Ebar}.
\end{theorem}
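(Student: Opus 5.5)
The plan is to reuse the one-step descent inequality of Lemma~\ref{lem:descent}, but instead of telescoping it as in Theorem~\ref{thm:nonconvex}, turn it into a contraction for the optimality gap $\delta_t:=\mathbb{E}[F(\boldsymbol{\theta}(t))-F^\star]$ by means of the PL inequality.

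\emph{Step 1: handle the bias cross term.} In \eqref{eq:descent}, apply Young's inequality in the form $ab\le \tfrac12 a^2+\tfrac12 b^2$ to the term $\eta\|\nabla F(\boldsymbol{\theta}(t))\|\big(\beta_{\rm sel}+\|\mathbb{E}[\bar e_t\mid\mathcal{I}_t]\|\big)$. This keeps $-\tfrac{\eta}{2}\|\nabla F(\boldsymbol{\theta}(t))\|^2$ and produces $\tfrac{\eta}{2}(\beta_{\rm sel}+\|\mathbb{E}[\bar e_t]\|)^2$.

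\emph{Step 2: control the quadratic term.} Bound $\mathbb{E}\|\bar{\mathbf{\zeta}}_t\|^2\lesssim \sigma_{\boldsymbol{\theta}}^2/|S_t|$, again using Lemma~\ref{lem:descent}. For the selected-gradient term, use the split
\[
\mathbb{E}\|\bar{\mathbf g}_{S_t,\star}\|^2\le 3\|\nabla F(\boldsymbol{\theta}(t))\|^2+3\beta_{\rm sel}^2+3\sigma_{\rm samp}^2/|S_t|,
\]
which follows from Assumption~\ref{ass:selection}. For $\mathbb{E}\|\bar e_t\|^2$, use Proposition~\ref{prop:ebound} together with the steady-regime bound \eqref{eq:Ebar}, taken as $\lesssim \bar E^2$.

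The step-size restriction $\eta\le\min\{1/(2L_{\boldsymbol{\theta}}),\mu_{\boldsymbol{\theta}}/L_{\boldsymbol{\theta}}^2\}$ should make $\tfrac{3L_{\boldsymbol{\theta}}\eta^2}{2}\|\nabla F\|^2$ absorbable into the retained negative term. The absorption is done with constant slack: choosing the coefficients in Young's inequality or enlarging the hidden constant leaves at least $-\tfrac{\eta}{4}\|\nabla F(\boldsymbol{\theta}(t))\|^2$, after a constant rescaling of the step-size condition. The leftover terms are then $\mathcal{O}\big(L_{\boldsymbol{\theta}}\eta^2V_{\boldsymbol{\theta}}+\eta(\beta_{\rm sel}+\bar E)^2\big)$, where the $L_{\boldsymbol{\theta}}\eta^2\bar E^2$ contribution is merged into the second term using $\eta L_{\boldsymbol{\theta}}\le 1$.

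\emph{Step 3: apply PL and unroll.} The PL condition gives $\|\nabla F\|^2\ge 2\mu_{\boldsymbol{\theta}}(F-F^\star)$. Substituting it and taking total expectation yields
\[
\delta_{t+1}\le(1-c\,\eta\mu_{\boldsymbol{\theta}})\,\delta_t+\mathcal{O}\big(L_{\boldsymbol{\theta}}\eta^2V_{\boldsymbol{\theta}}+\eta(\beta_{\rm sel}+\bar E)^2\big).
\]
Unrolling this recursion and summing the geometric series $\sum_s(1-c\eta\mu_{\boldsymbol{\theta}})^s\le 1/(c\eta\mu_{\boldsymbol{\theta}})$ gives the neighborhood term $\mathcal{O}\big(\tfrac{L_{\boldsymbol{\theta}}}{\mu_{\boldsymbol{\theta}}}\eta V_{\boldsymbol{\theta}}+\tfrac{(\beta_{\rm sel}+\bar E)^2}{\mu_{\boldsymbol{\theta}}}\big)$. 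The constant $c$ is adjusted (absorbed into the $\mathcal{O}$ and the step condition) so that the contraction factor reads $(1-\eta\mu_{\boldsymbol{\theta}})^t$ as stated; the stated $(1-\eta\mu_{\boldsymbol{\theta}})(t)$ is read as this power.

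\emph{Main obstacle.} The delicate point is the conditioning in Step 1. Lemma~\ref{lem:descent} involves $\|\mathbb{E}[\bar e_t\mid\mathcal{I}_t]\|$, while \eqref{eq:Ebar} controls only a steady-regime bound on $\|\mathbb{E}[\bar e_t]\|$. I would take the steady-regime bound as holding uniformly in $t$, conditionally as well, as the paper does for Theorem~\ref{thm:nonconvex}. This amounts to iterating Lemma~\ref{lem:tracking} with constant steps, which gives $\|\Delta_k(t)\|\lesssim (C_\star\eta\bar G+\gamma\sigma_{\boldsymbol{\phi}})/(\gamma\mu_{\boldsymbol{\phi}})$ after transients, and then applying Proposition~\ref{prop:ebound}. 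Any transient from $\Delta_k(0)$ decays like $(1-\gamma\mu_{\boldsymbol{\phi}})^t$ and would be folded into the leading geometric term.
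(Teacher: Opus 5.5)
Your argument is correct, and it differs from the paper's in the order of operations, which matters here. The paper applies PL directly to the descent inequality of Lemma~\ref{lem:descent}. It replaces the \emph{positive} cross term $\eta\|\nabla F(\boldsymbol{\theta}(t))\|(\beta_{\rm sel}+\|\mathbb{E}[\bar e_t]\|)$ by $\eta\sqrt{2\mu_{\boldsymbol{\theta}}}\sqrt{F(\boldsymbol{\theta}(t))-F^\star}\,(\beta_{\rm sel}+\|\mathbb{E}[\bar e_t]\|)$, then uses Young's inequality with $\rho=\mu_{\boldsymbol{\theta}}$ in function-value space, and bounds all of $\Xi_t$ by $V_{\boldsymbol{\theta}}$ up to constants.

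That substitution needs $\|\nabla F\|\le\sqrt{2\mu_{\boldsymbol{\theta}}(F-F^\star)}$, which is the reverse of the PL inequality. The valid substitute $\|\nabla F\|^2\le 2L_{\boldsymbol{\theta}}(F-F^\star)$ gives a bias floor of order $L_{\boldsymbol{\theta}}(\beta_{\rm sel}+\bar E)^2/\mu_{\boldsymbol{\theta}}^2$. The $\|\nabla F\|^2$ inside $\mathbb{E}\|\bar{\mathbf g}_{S_t,\star}\|^2$ (which $V_{\boldsymbol{\theta}}$ does not contain) would then have to be handled the same way, which is the kind of step that forces $\eta\le c\,\mu_{\boldsymbol{\theta}}/L_{\boldsymbol{\theta}}^2$.

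Your route is the standard valid one: Young in gradient space first, an explicit split of $\mathbb{E}\|\bar{\mathbf g}_{S_t,\star}\|^2$ whose gradient part is absorbed by the step size, and PL used only to lower-bound the retained negative term. It is what actually delivers the stated floor $(\beta_{\rm sel}+\bar E)^2/\mu_{\boldsymbol{\theta}}$ together with $L_{\boldsymbol{\theta}}\eta V_{\boldsymbol{\theta}}/\mu_{\boldsymbol{\theta}}$. With the bookkeeping below it needs only $\eta\le 1/(2L_{\boldsymbol{\theta}})$, so the $\mu_{\boldsymbol{\theta}}/L_{\boldsymbol{\theta}}^2$ part of the hypothesis becomes superfluous. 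The paper's ordering buys brevity and nothing more.

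Some of your bookkeeping should be tightened. With your factor-$3$ split and $\eta\le 1/(2L_{\boldsymbol{\theta}})$, the term $\tfrac{3L_{\boldsymbol{\theta}}\eta^2}{2}\|\nabla F\|^2$ can reach $\tfrac{3\eta}{4}\|\nabla F\|^2$, which is more than the $\tfrac{\eta}{2}\|\nabla F\|^2$ you retain after Step 1. Retaining only $-\tfrac{\eta}{4}\|\nabla F\|^2$ instead yields the factor $(1-\eta\mu_{\boldsymbol{\theta}}/2)^t$, and the base of a geometric factor cannot be absorbed into an $\mathcal{O}(\cdot)$.

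No rescaling is needed. The conditional bias--variance identity together with $\|x+y\|^2\le\tfrac32\|x\|^2+3\|y\|^2$ gives $\mathbb{E}[\|\bar{\mathbf g}_{S_t,\star}\|^2\mid\mathcal{I}_t]\le\tfrac32\|\nabla F(\boldsymbol{\theta}(t))\|^2+3\beta_{\rm sel}^2+\sigma_{\rm samp}^2/|S_t|$. Take Young in the form $ab\le\tfrac18 a^2+2b^2$. Then, under the stated $\eta\le 1/(2L_{\boldsymbol{\theta}})$, the coefficient of $\|\nabla F\|^2$ is at most $-\eta+\tfrac{\eta}{8}+\tfrac{3\eta}{8}=-\tfrac{\eta}{2}$, so PL gives exactly the contraction $(1-\eta\mu_{\boldsymbol{\theta}})$ per step.

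Two further points need care. First, \eqref{eq:Ebar} controls $\|\mathbb{E}[\bar e_t]\|$, not $\mathbb{E}\|\bar e_t\|^2$. Bounding the latter by a multiple of $\bar E^2$ really needs a squared-norm analogue of Lemma~\ref{lem:tracking}. Second, the $\Delta_k(0)$ transient cannot literally be folded into $(1-\eta\mu_{\boldsymbol{\theta}})^t(F(\boldsymbol{\theta}(0))-F^\star)$, since its amplitude and rate are different. The paper makes the same steady-regime idealization without comment. Adopting it for all $t$, as you propose, therefore puts you on the same footing as the paper.
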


Theorem~\ref{thm:pl} shows that the proposed algorithm converges linearly to a neighborhood of the optimal solution, with the residual error determined by device selection bias and system heterogeneity.

\begin{proof}
Apply PL to \eqref{eq:descent} and take total expectation:
\begin{equation}
\begin{aligned}
\mathbb{E}\left[F(\boldsymbol{\theta}(t+1))\!-\!F^\star\right]
&\le \!\frac{L_{\boldsymbol{\theta}}\eta^2}{2}\,\Xi_t + \mathbb{E}\left[F(\boldsymbol{\theta}(t))\!-\!F^\star\right] \\
&- \!\eta\!\cdot\!2\mu_{\boldsymbol{\theta}}\,\mathbb{E}\left[F(\boldsymbol{\theta}(t))-F^\star\right] \\
& + \!\eta\,\!\sqrt{2\mu_{\boldsymbol{\theta}}}\,
   \mathbb{E}\!\left[\sqrt{F(\boldsymbol{\theta}(t))-F^\star}\,
   (\beta_{\rm sel}+\|\mathbb{E}[\bar e_t]\|)\right] 
\end{aligned}
\end{equation}
where $\Xi_t:=\mathbb{E}\|\bar {\mathbf{g}}_{S_t,\star}\|^2+\mathbb{E}\|\bar e_t\|^2+\mathbb{E}\|\bar{\mathbf{\zeta}}_t\|^2$.
Use $ab\le \frac{\rho}{2}a^2+\frac{1}{2\rho}b^2$ with $a=\sqrt{F(\boldsymbol{\theta}(t))-F^\star}$, $b=\beta_{\rm sel}+\|\mathbb{E}[\bar e_t]\|$ and choose $\rho=\mu_{\boldsymbol{\theta}}$ to absorb the mixed term into $(1-\eta\mu_{\boldsymbol{\theta}})\mathbb{E}[F(\boldsymbol{\theta}(t))-F^\star]$ plus a constant multiple of $(\beta_{\rm sel}+\bar E)^2$. Bound $\Xi_t$ by $V_{\boldsymbol{\theta}}$ (up to constants) and unroll the linear recursion to get \eqref{eq:pl-rate}.
\end{proof}



\ifCLASSOPTIONcaptionsoff
  \newpage
\fi


\bibliography{newRefer}
%

%

\begin{IEEEbiography}[{\includegraphics[width=1in,height=1.25in,clip,keepaspectratio]{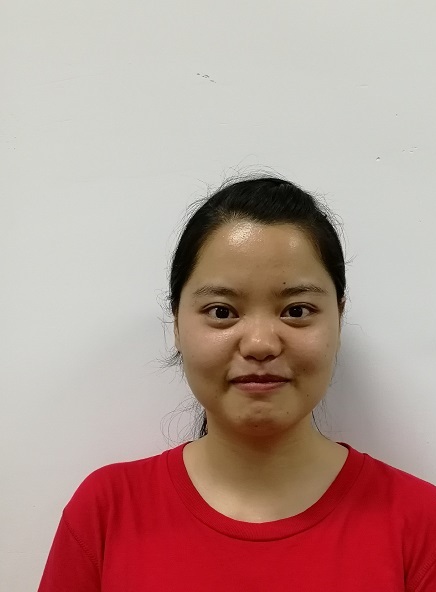}}]{Shiqian Guo}
     received the Ph.D. degree in information and communication engineering from South China University of Technology, in 2024. She is currently a Post-Doctoral Research Fellow with the
    Department of Computer Science, North Carolina State University. Her current research interests include wireless communications, federated learning, and quantum sensing.
\end{IEEEbiography}

\begin{IEEEbiography}
	[{\includegraphics[width=1in,height=1.25in,clip,keepaspectratio]{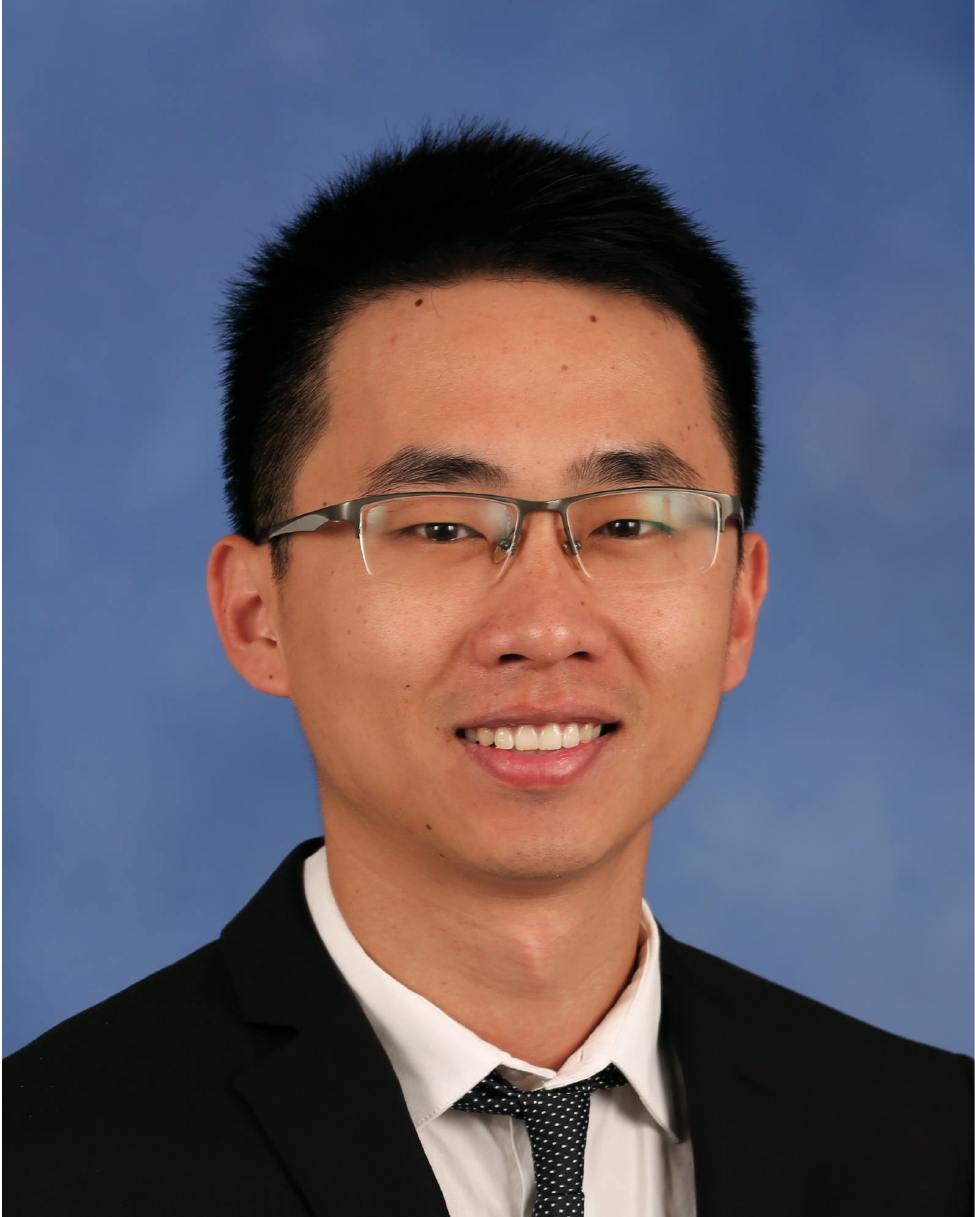}}]
	{Jianqing Liu}(Member, IEEE) is currently an Associate Professor of Computer Science at the NC State University. He received the Ph.D. degree from The University of Florida in 2018 and the B.S. degree from University of Electronic Science and Technology of China in 2013. His research interest is wireless communications and networking, security and privacy. He received the US NSF CAREER Award in 2021. He also received several best paper awards including 2018 Best Journal Paper Award from IEEE Communications Society. He has been serving as the associate editor with \emph{IEEE Transactions on Wireless Communications} and editor with \emph{IEEE Transactions on Communications} since 2024 and was the associate editor with \emph{IEEE Transactions on Vehicular Technology} from 2021 to 2024.
\end{IEEEbiography}

\begin{IEEEbiography}
	[{\includegraphics[width=1in,height=1.25in,clip,keepaspectratio]{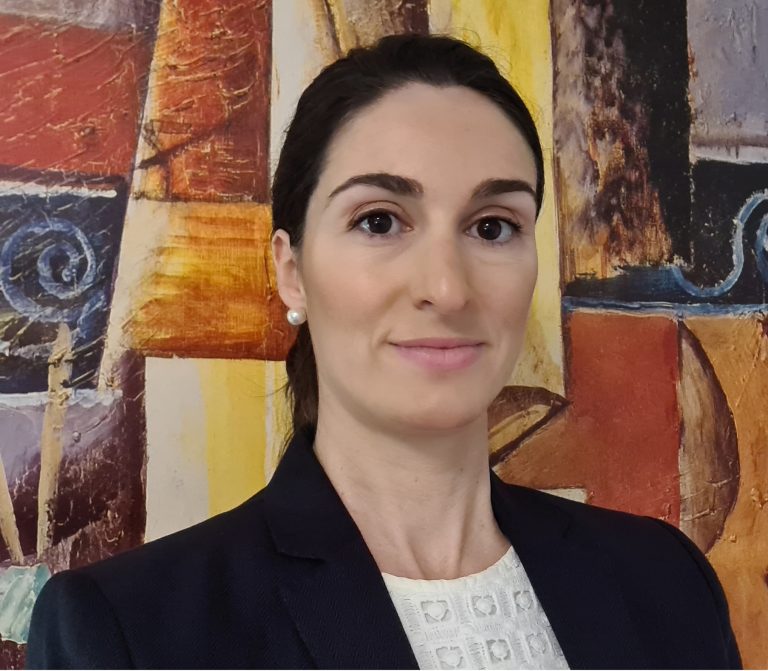}}]
{Beatriz Lorenzo}(Senior Member, IEEE) received the M.Sc. degree in telecommunication engineering from the University of Vigo, Vigo, Spain, in 2008, and the Ph.D. degree from the University of Oulu, Oulu, Finland, in 2012. She is currently an Assistant Professor and the Director of the Network Science Lab, Department of Electrical and Computer Engineering, University of Massachusetts, Amherst, MA, USA. She has published more than 70 papers and coauthored two books on advanced wireless networks. The latest book Artificial Intelligence and Quantum Computing for Advanced Wireless Networks (Wiley, 2022), covers the enabling technologies for the definition, design, and analysis of incoming 6G/7G systems. Her research interests include AI for wireless networks, B5G and 6G network architectures and protocol design, mobile computing, optimization, and network economics. 
    
Dr. Lorenzo was the recipient of the Fulbright Visiting Scholar Fellowship with the University of Florida from 2016 to 2017. She served as Associate Editor of IEEE Transactions on Vehicular Technology and IEEE Transactions on Mobile Computing. She was the General Co-Chair for WiMob Conference in 2019 and serves regularly in the TPC of top IEEE and ACM conferences.

\end{IEEEbiography}






\end{document}